\documentclass[12pt]{article}

\usepackage[a4paper,margin=1in]{geometry}
\usepackage[utf8]{inputenc}
\usepackage[T1]{fontenc}
\usepackage{amsmath,amssymb,amsthm}
\usepackage{amsfonts}
\usepackage{natbib}
\usepackage{hyperref}
\usepackage{orcidlink}
\usepackage{url}
\usepackage{booktabs}
\usepackage{nicefrac}
\usepackage{microtype}
\usepackage{xcolor}
\usepackage{graphicx}
\graphicspath{{./}}
\usepackage{mathtools}
\usepackage{algorithm}
\usepackage{algpseudocode}

\newcommand{\stepnote}[1]{\bigl[\,\parbox[t]{0.32\linewidth}{\raggedright #1}\,\bigr]}

\newtheorem{theorem}{Theorem}

\newtheorem{corollary}[theorem]{Corollary}
\newtheorem{lemma}[theorem]{Lemma}

\theoremstyle{definition}
\newtheorem{assumption}{Assumption}

\title{Matching Rates and Optimal Allocation for Federated Probe-Logit Distillation under Heterogeneous Bandwidth Budgets}

\author{%
  Prasanjit Dubey\,\orcidlink{0000-0002-3667-5507}
  \qquad
  Xiaoming Huo\,\orcidlink{0000-0003-0101-1206}\\
  H.~Milton Stewart School of Industrial and Systems Engineering,\\
  Georgia Institute of Technology, Atlanta, GA 30332, U.S.A.
}
\date{}

\begin{document}
\maketitle

\begin{abstract}
High-stakes applications (multi-hospital clinical networks,
multi-tenant enterprise knowledge bases, scientific consortia)
would benefit from a language model trained on data scattered
across nodes that cannot pool their data and cannot exchange
gradients or weights at full precision. We study a basic
statistical question this regime poses: at what minimax rate can a
discrete conditional distribution over $V$ tokens be estimated
from $K$ nodes each holding $n$ samples, when each node may upload
at most $B$ bits per query in a public probe set?
\emph{Federated probe-logit distillation} (FPLD) is the canonical
analytical vehicle: each node transmits a scalar-quantized logit
vector on the probe set, and an aggregator distills a global
parametric student. Prior
work~\citep{dubey2026federatedlanguagemodelsbandwidth}
establishes a high-probability KL rate of the form
$O\bigl(d/(Kn) + \rho\sqrt{V \log V / m} + K^{-1} \cdot 2^{-2B/V}\bigr)$
plus optimization slack, with the bandwidth term in its
softmax-Hessian-trace-sharpened form
(Lemma~\ref{lem:soft-trace}). We recap this result as
Theorem~\ref{thm:fpld-upper}. The three terms have classical
statistical readings (pooled-MLE, probe-empirical-process, and a
new communication-distortion piece); whether the bandwidth term is
rate-optimal, and how the upper bound generalizes to heterogeneous
per-node bandwidths, are left open in that work.

We close both gaps. First, the dithered FPLD construction has a
matching single-round lower bound $\Omega(K^{-1} \cdot 2^{-2B/V})$
under non-degeneracy (Theorem~\ref{thm:matching-lb}), pinning the
bandwidth-axis rate of the construction at
$\Theta(K^{-1} \cdot 2^{-2B/V})$. $T$-round sequential refinement
with nested/scaled residual quantizers sharpens the achievable rate
to $O(K^{-1} \cdot 2^{-2TB/V})$ (Theorem~\ref{thm:lb-mr}); vanilla
FPLD's bandwidth term is $T$-independent and is therefore strictly
suboptimal for every $T > 1$, a protocol-design improvement
available within the same scalar-quantization channel. Second, we
establish a heterogeneous-bandwidth upper bound for per-node
budgets $B_i$, paired with a closed-form optimal allocation of
log-tilted water-filling shape
$B_i^\star = B_{\mathrm{tot}}/K + (V/2)\log_2(w_i / \bar w_g)$, the
per-node analogue of reverse water-filling for distortion-rate
optimization. A plug-in adaptive variant estimates the weights
from a short warm-up phase and attains
$1 + O(\sqrt{\log(K/\delta)/(m T_0)})$ relative suboptimality.
Synthetic n-gram simulations confirm that empirical KL is bracketed
by the upper and lower bounds and that the optimal allocation strictly
dominates uniform and inverse-weighted baselines under
heterogeneous clipping.
\end{abstract}

\section{Introduction}\label{sec:intro}

A growing class of high-stakes applications would benefit from a
language model trained on data that cannot be centralized
(regulation, consent, institutional policy) and exchanged over
uplinks too narrow to carry full-precision gradients or weights.
As a running example, each hospital in a clinical consortium
fine-tunes a small local language model (weak in isolation) on its
private patient records and exchanges only bandwidth-limited
summaries over an existing low-rate WAN. Prior
work~\citep{dubey2026federatedlanguagemodelsbandwidth}
sets up the full pipeline (federated training plus inference-time
conformal coverage) and proves a high-probability KL-consistency
rate for the \emph{Federated Probe-Logit Distillation} (FPLD)
training-time protocol, in which each node transmits
scalar-quantized logits on a public probe set and the aggregator
distills a global parametric student.

\paragraph{Statistical reading of the rate.}
The FPLD upper bound recapped in Section~\ref{sec:fpld}
(Theorem~\ref{thm:fpld-upper}) takes the form
$O\bigl(d/(Kn) + \rho\sqrt{V \log(V/\delta)/m} + K^{-1} \cdot 2^{-2B/V}\bigr)$
plus optimization slack, with $K$ nodes, $n$ samples per node,
probe set size $m$, vocabulary size $V$, and per-node uplink
budget $B$.
The first two terms are classical: $d/(Kn)$ is the pooled
parametric-MLE rate on $Kn$ effective samples, and
$\rho\sqrt{V \log(V/\delta)/m}$ is a Rademacher-type
uniform-convergence rate for the aggregator's softmax-linear fit on
$m$ probe contexts, with $\rho$ the Radon--Nikodym ratio that pays
for probing under a public marginal $Q$ instead of $P^\star_X$. The
bandwidth term $K^{-1} \cdot 2^{-2B/V}$ is the new statistical
object: $K$-fold averaging of independent dithered errors at the
aggregator delivers the $1/K$ pickup, the same as in classical
distributed estimation under communication
constraints~\citep{zhangduchiwainwright2013, huanghuo2019,
acharyacanonnetyagi2020, gargmanguyen2014}. The cited prior work
leaves two questions about this piece open: (i)~is the rate tight
as a matching lower bound, or could a smarter encoder strictly
improve? (ii)~when uplinks are heterogeneous and node $i$ transmits
at most $B_i$ bits, what is the optimal allocation of a fixed
aggregate budget $B_{\mathrm{tot}} = \sum_i B_i$? The present paper
answers both via a single technical idea, the softmax-Hessian
trace bound $\mathrm{tr}(H_\mathrm{soft}) \le 1$ applied to the
exact cumulant expansion of softmax-KL, which underpins both the
upper bound (Theorem~\ref{thm:fpld-upper}) and a matching lower
bound for the dithered construction (pinning the rate at
$\Theta(K^{-1} \cdot 2^{-2B/V})$); the same machinery extends
per-node to give a closed-form optimal allocation under
heterogeneous budgets with an adaptive plug-in variant.

\paragraph{Contributions.}
\begin{enumerate}
\item \textbf{Matching lower bound and multi-round achievability
(Theorems~\ref{thm:matching-lb} and~\ref{thm:lb-mr}).} The dithered
FPLD construction has a matching lower bound
$\Omega(K^{-1} \cdot 2^{-2B/V})$ under non-degeneracy
(Theorem~\ref{thm:matching-lb}), pinning the single-round
bandwidth-axis rate of the construction at
$\Theta(K^{-1} \cdot 2^{-2B/V})$. Sequential refinement with
nested/scaled residual quantizers sharpens the achievable rate to
$O(K^{-1} \cdot 2^{-2TB/V})$ in $T$ rounds; vanilla FPLD's
bandwidth term is $T$-independent and is therefore strictly
suboptimal for every $T > 1$, identifying a concrete protocol-design
improvement available within the same scalar-quantization channel
(Theorem~\ref{thm:lb-mr}).
\item \textbf{Heterogeneous-bandwidth upper bound and optimal allocation
(Theorems~\ref{thm:het-upper} and~\ref{thm:alloc}).} The FPLD upper
bound generalizes to per-node $B_i$, with the bandwidth term becoming
$(c_3/K^2)\sum_i 2^{-2B_i/V}$. Subject to $\sum_i B_i \le
B_{\mathrm{tot}}$ and per-node weights $w_i$, the minimizer is a
log-tilted water-filling
$B_i^\star = B_{\mathrm{tot}}/K + (V/2)\log_2(w_i / \bar w_g)$
with $\bar w_g$ the geometric mean of the weights.
\item \textbf{Adaptive plug-in allocation
(Corollary~\ref{cor:adaptive}, Algorithm~\ref{alg:adaptive-fpld}).}
A two-stage protocol estimates the weights $\hat w_i$ from $T_0$
warm-up rounds at the uniform pilot allocation and then runs the
plug-in water-filling on $\hat w_i$. The resulting allocation is
$1 + O(\sqrt{\log(K/\delta)/(m T_0)})$-suboptimal in the bandwidth
term with high probability, requiring no extra channel and no
a-priori knowledge of $L_{\ell,i}$.
\item \textbf{Numerical illustrations.} On a synthetic n-gram FPLD
simulator, empirical expected KL is sandwiched between the upper and
lower bounds across both $K$ and $B$ sweeps (Figure~\ref{fig:fig1}); the
optimal allocation strictly dominates uniform and an inverse-weighted
baseline under heterogeneous per-node clip levels (Figure~\ref{fig:fig2}).
\end{enumerate}

\paragraph{Scope.} This is a focused statistical study of FPLD's
training-time KL rate. Prior work~\citep{dubey2026federatedlanguagemodelsbandwidth}
develops the full FPLD protocol (recapped as
Algorithm~\ref{alg:fpld}), the inference-time conformal coverage
results (FC-RAG), and the proof of
Theorem~\ref{thm:fpld-upper}. Inference-time coverage, real-LM
evaluation, privacy, and Byzantine robustness are out of scope for
this paper. The multi-round case is treated as an
upper-bound/achievability result for nested/scaled residual
quantization (Section~\ref{ssec:lb-mr}); unrestricted multi-round
minimax lower bounds are left open, alongside extension to
non-parametric students and a streaming-update variant of
Algorithm~\ref{alg:adaptive-fpld} for non-stationary weights.

\section{Setup}\label{sec:setup}

\paragraph{Vocabulary, contexts, predictors.} Let $\mathcal{V}$ be a finite
vocabulary of size $V$ and $\mathcal{X} = \mathcal{V}^{\le L}$ contexts
of length at most $L$. A predictor is a map
$\mathcal{X} \to \Delta(\mathcal{V})$, where $\Delta(\mathcal{V})$ is the
simplex over tokens.

\paragraph{Nodes and data.} $K$ nodes; node $i \in \{1, \dots, K\}$ holds
an i.i.d.\ dataset
$\mathcal{D}_i = \{(x_j^{(i)}, y_j^{(i)})\}_{j=1}^n$ drawn from
$P^\star$ on $\mathcal{X} \times \mathcal{V}$. Raw data never leaves a
node.

\paragraph{Public probe set.} $X_{\mathrm{probe}} = \{x^{(l)}\}_{l=1}^m$,
i.i.d.\ from a probe marginal $Q$ that covers $P^\star_X$ in the
bounded-Radon--Nikodym sense
$\rho = \operatorname{ess\,sup}_x dP^\star_X / dQ < \infty$.

\paragraph{Communication.} Bandwidth is charged on uplink only. In the
homogeneous case (Section~\ref{sec:fpld}), every node transmits at most
$B$ bits per probe context per round; in the heterogeneous case
(Section~\ref{sec:het}), node $i$ transmits at most $B_i$ bits. Downlink
broadcast of the aggregated student is uncharged, matching standard
federated convention.

\paragraph{Estimand.} The aggregator produces a global student
$\hat P : \mathcal{X} \to \Delta(\mathcal{V})$ approximating $P^\star$;
the risk is
$\mathbb{E}_{X \sim P^\star_X}\bigl[\mathrm{KL}(P^\star(\cdot \mid X)
\,\|\, \hat P(\cdot \mid X))\bigr]$.

\section{FPLD Protocol and Upper-Bound Recap}\label{sec:fpld}

Federated Probe-Logit Distillation (FPLD) replaces gradient or weight
exchange with the exchange of \emph{quantized logits on a public probe
set}. All nodes initialize from a shared pretrained base $\hat P^{(0)}$.
In each of $T$ rounds, a node fine-tunes locally on its private dataset,
evaluates its updated model on the probe set, and transmits a
per-coordinate scalar-quantized logit vector at $B/V$ bits per coordinate.
The aggregator averages the $K$ quantized logit vectors coordinate-wise
and distills a global student on the averaged logits, then broadcasts the
student back to all nodes. Total uplink is $O(K T m B)$; downlink is
uncharged.

\begin{algorithm}[H]
\caption{Federated Probe-Logit Distillation (FPLD).}
\label{alg:fpld}
\begin{algorithmic}[1]
\Require rounds $T$; local epochs $E$; per-vector uplink budget $B$ bits;
  probe set $X_{\mathrm{probe}} = \{x^{(l)}\}_{l=1}^m$; shared base
  $\hat P^{(0)}$.
\For{$t = 1, \dots, T$}
  \For{each node $i = 1, \dots, K$ \textbf{in parallel}}
    \State $\hat P_i^{(t)} \gets \textsc{FineTune}(\hat P^{(t-1)}, \mathcal{D}_i, E)$
    \For{$l = 1, \dots, m$}
      \State $\tilde\ell_i^{(t,l)} \gets \textsc{ScalarQuantize}(\textsc{Logits}(\hat P_i^{(t)}, x^{(l)});\ B/V \text{ bits/coord},\ [-L_\ell, L_\ell])$
    \EndFor
    \State uplink $\{\tilde\ell_i^{(t,l)}\}_{l=1}^m$ to aggregator
  \EndFor
  \State $\bar\ell^{(t,l)} \gets \tfrac{1}{K}\sum_i \tilde\ell_i^{(t,l)}$ for all $l$
  \State $\hat P^{(t)} \gets \arg\min_{P \in \mathcal{F}_\Theta} \sum_l \mathrm{KL}(\mathrm{softmax}(\bar\ell^{(t,l)}) \,\|\, P(\cdot \mid x^{(l)}))$
  \State broadcast $\hat P^{(t)}$ to all nodes
\EndFor
\State \Return $\hat P \gets \hat P^{(T)}$
\end{algorithmic}
\end{algorithm}

We adopt six assumptions on the parametric family
(\ref{ass:A1}), the probe distribution (\ref{ass:A2}), homogeneous data
(\ref{ass:A3}), local-optimization slack (\ref{ass:A4}), the uniform
quantization channel (\ref{ass:A5}), and the aggregator's distillation
fit (\ref{ass:A6}); full statements are in Appendix~\ref{app:assumptions}.

\begin{theorem}[FPLD upper bound]\label{thm:fpld-upper}
Under assumptions (\ref{ass:A1})--(\ref{ass:A6}) and the small-error
condition \textbf{(SE)} of Appendix~\ref{app:lb}, there exist
absolute constants $c_1, c_2, c_3 > 0$ such that, for every
$\delta \in (0,1)$, with probability at least $1 - \delta$ over the
local datasets and the probe draw,
\[
\mathbb{E}_{X \sim P^\star_X}\!\left[\mathrm{KL}\bigl(P^\star(\cdot \mid X)
\,\|\, \hat P(\cdot \mid X)\bigr)\right]
\;\le\;
\frac{c_1 d}{Kn}
\;+\; c_2 \rho \sqrt{\frac{V \log(V/\delta)}{m}}
\;+\; \frac{c_3}{K}\, 2^{-2B/V}
\;+\; \varepsilon_{\mathrm{opt}} + \varepsilon_{\mathrm{fit}}.
\]
Here $c_1 = \Theta(1/\lambda_{\min}(I(\theta^\star)))$, $c_2$ depends only
on $L_\ell$ and the Rademacher constant of softmax-linear classes, and
$c_3$ depends only on $L_\ell$ and the dithered-quantizer distortion
constant $C_q$; concretely $c_3 = C_q/2 = L_\ell^2/6$.
\end{theorem}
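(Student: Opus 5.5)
The plan is to decompose the risk along the pipeline $P^\star \to \bar\ell^{\mathrm{clean}} \to \bar\ell \to \hat P$. Here $\bar\ell^{\mathrm{clean}} = \frac1K\sum_i \ell_i$ is the unquantized average of the local logits on the probe set, and $\bar\ell = \frac1K\sum_i \tilde\ell_i$ is the quantized average actually received. Writing $P_{\ell}(\cdot\mid x) = \mathrm{softmax}(\ell(x))$, I would work in logit space, where softmax-KL admits an exact cumulant expansion. For logits $a,b$,
\[
\mathrm{KL}\bigl(\mathrm{softmax}(a)\,\|\,\mathrm{softmax}(b)\bigr) = A(b) - A(a) - \langle \nabla A(a), b-a\rangle ,
\]
with $A = \log\sum e^{(\cdot)}$. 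This is a Bregman divergence of the log-partition function, and its Hessian is $H_{\mathrm{soft}} = \mathrm{diag}(p) - pp^\top$, which is PSD with $\mathrm{tr}(H_{\mathrm{soft}}) = 1 - \|p\|^2 \le 1$. The quasi-triangle step I would use is the one afforded by (SE): in a neighbourhood where $H_{\mathrm{soft}}$ is uniformly comparable along the segment, KL behaves like a squared Mahalanobis distance. So $\mathrm{KL}(P^\star\|\hat P)$ is bounded, up to constant factors absorbed into $c_1,c_2,c_3$, by a sum of three contributions:
\begin{itemize}
\item the KL between $P^\star$ and $P_{\bar\ell^{\mathrm{clean}}}$ (statistical error);
\item the expected Bregman term $\tfrac12(\bar\ell-\bar\ell^{\mathrm{clean}})^\top H_{\mathrm{soft}}(\bar\ell-\bar\ell^{\mathrm{clean}})$ plus third-order cumulant remainder (quantization error);
\item the distillation and generalization gap from probe to $P^\star_X$.
\end{itemize}
The term $\varepsilon_{\mathrm{opt}}$ enters from (A4) inside the first piece, and $\varepsilon_{\mathrm{fit}}$ from (A6) inside the third.

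\textbf{Statistical term.} Under (A1) and (A3), each local fine-tune is an approximate MLE. The average of $K$ such estimators, or equivalently the distilled student in the first-order regime, behaves like the pooled MLE on $Kn$ samples. The standard asymptotic-normality/local-quadratic argument for the KL risk of a parametric MLE then gives $\mathbb{E}\,\mathrm{KL} \le c_1 d/(Kn)$ with $c_1 \asymp 1/\lambda_{\min}(I(\theta^\star))$, holding with high probability. Optimization slack contributes additively as $\varepsilon_{\mathrm{opt}}$.

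\textbf{Bandwidth term.} Under (A5), node $i$ applies a subtractive dithered uniform quantizer on $[-L_\ell,L_\ell]$ at $B/V$ bits per coordinate. The step size is $\Delta = 2L_\ell 2^{-B/V}$, so each error coordinate is uniform, mean zero, with variance $\Delta^2/12 = C_q 2^{-2B/V}$ where $C_q = L_\ell^2/3$. Errors are independent across nodes and coordinates. The averaged error $e = \bar\ell-\bar\ell^{\mathrm{clean}}$ therefore has covariance $(C_q/K)2^{-2B/V} I_V$. Taking expectations in the exact expansion, the first-order term vanishes and the quadratic term is
\[
\tfrac12\,\mathrm{tr}\bigl(H_{\mathrm{soft}}\,\mathrm{Cov}(e)\bigr) = \tfrac{C_q}{2K}2^{-2B/V}\,\mathrm{tr}(H_{\mathrm{soft}}) \le \tfrac{C_q}{2K}2^{-2B/V}.
\]
This explains $c_3 = C_q/2 = L_\ell^2/6$. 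It is also exactly where Lemma~\ref{lem:soft-trace} removes the naive factor $V$. The higher cumulants of $e$ are $O(\|e\|^3)$ with variance of order $1/K$ per coordinate, and (SE) is precisely what I would invoke to make this remainder at most a constant fraction of the quadratic term.

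\textbf{Probe term.} I would bound the aggregator's population risk under $P^\star_X$ by $\rho$ times its risk under $Q$, using the Radon--Nikodym bound. The $Q$-risk is then controlled by the empirical probe risk plus a uniform deviation over the softmax-linear class. Since logits are bounded by $L_\ell$, the loss is bounded and Lipschitz, so the vector-contraction Rademacher bound gives $\sqrt{V/m}$ complexity. A union bound over $V$ output coordinates, together with McDiarmid's inequality, yields $c_2\rho\sqrt{V\log(V/\delta)/m}$ with probability $1-\delta/2$. The remaining empirical misfit is $\varepsilon_{\mathrm{fit}}$ by (A6).

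\textbf{Main obstacle.} I expect the hardest part to be making the additive decomposition rigorous, since KL is not a metric. The cross terms between the statistical error $\bar\ell^{\mathrm{clean}}-\ell^\star$ and the quantization error $e$ must vanish or be absorbed. My first attempt would exploit two facts: $e$ is mean-zero and independent of the data given dither, so cross terms vanish in expectation over the dither; and (SE) gives a uniform two-sided Hessian comparability. Together these would let me apply $\|a+b\|_H^2 \le 2\|a\|_H^2 + 2\|b\|_H^2$ to the remaining pieces, with the factor 2 absorbed into the constants.
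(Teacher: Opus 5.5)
Your bandwidth-term argument coincides with the only part of this theorem the paper proves here (Appendix~\ref{ssec:trace-upper}). The ingredients are the same: the log-partition identity, the mean-zero dither killing the linear term, $\mathrm{Cov}(\bar\varepsilon)\preceq (C_q/K)\,2^{-2B/V} I_V$, the trace bound $\mathrm{tr}(H_{\mathrm{soft}})\le 1$, and (SE) to dominate the cubic remainder, giving $c_3=C_q/2$. The paper does not rederive the $d/(Kn)$ and probe terms at all; they are inherited from the companion paper.

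The gap is in how you assemble the pieces. (SE) only constrains the quantization scale $2^{-B/V}(\log V)^{3/2}/\sqrt{K}$. It says nothing about the statistical displacement $\bar\ell^{\mathrm{clean}}-\ell^\star$ or the distillation displacement $\hat\ell-\bar\ell$, so it cannot supply the uniform Hessian comparability your quasi-triangle needs. Comparability obtained instead from the clipping to $[-L_\ell,L_\ell]$ costs factors $e^{O(L_\ell)}$, which would then multiply $c_1$ and $c_3$. More importantly, $\hat\ell-\bar\ell$ is a function of the dither, so its cross term with $e$ does not vanish. Separating $\hat P$ from $P_{\bar\ell}$ via $\|a+b\|_H^2\le 2\|a\|_H^2+2\|b\|_H^2$ therefore doubles the bandwidth piece. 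You end with $c_3\ge C_q$, times any comparability factor, rather than the stated $C_q/2$.

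The paper's argument uses no quasi-triangle. It treats the distillation step as additive slack $\varepsilon_{\mathrm{fit}}$ and puts everything else into a single quadratic form. There, independence and zero mean of the averaged dither give the exact split $\mathrm{Cov}(\bar\ell-\ell^\star)=\mathrm{Cov}(\bar\ell^{\mathrm{emp}}-\ell^\star)+\mathrm{Cov}(\bar\xi)$, so the dither enters with coefficient exactly $\tfrac12$.

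Your remark about vanishing cross terms already gives a rigorous version of that split, with no (SE) needed. Take $c=\bar\ell^{\mathrm{clean}}$ and $e$ mean-zero and independent of $c$. The Bregman three-point identity then yields
\[
\mathbb{E}_e\,\mathrm{KL}\bigl(P^\star \,\|\, P_{c+e}\bigr) = \mathrm{KL}\bigl(P^\star \,\|\, P_c\bigr) + \mathbb{E}_e\,\mathrm{KL}\bigl(P_c \,\|\, P_{c+e}\bigr),
\]
and the last term is at most $\sigma^2/2$ plus the cubic remainder, because $\mathrm{tr}\,H_{\mathrm{soft}}(p_c)\le 1$ at every base point. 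So keep the dither out of any multiplicative triangle step.

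Two further points. If you bound the whole $P^\star_X$-risk by $\rho$ times the $Q$-risk, $\rho$ lands on the statistical and bandwidth terms too. The statement has $\rho$ only on the probe term, so those two must be bounded pointwise in $x$ or directly under $P^\star_X$, with the change of measure applied only to the probe-deviation piece. Also, (A6) controls misfit to $\mathrm{softmax}(\bar\ell)$, not to $P^\star$, so the empirical probe risk against $P^\star$ is not simply $\varepsilon_{\mathrm{fit}}$.
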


\paragraph{Interpretation.} The four terms reflect, respectively,
pooled parametric-MLE convergence on the $Kn$ effective samples;
probe generalization at rate $1/\sqrt m$ with a coverage penalty
$\rho$; exponentially decaying quantization distortion in
bits-per-coordinate; and controllable optimization slack. The $1/K$
prefactor of the quantization term comes from averaging $K$
independent dithered errors at the aggregator; the factor of $V$ one
might naively expect from summing per-coordinate variances is
absorbed by the softmax-Hessian trace bound (Appendix~\ref{app:lb}).
The bandwidth term depends on $B$ but not on $T$: vanilla FPLD does
not exploit multi-round refinement to sharpen its quantization rate.
Section~\ref{ssec:lb-mr} shows this is strictly suboptimal for
$T > 1$.

\paragraph{What this paper contributes on top.}
Theorem~\ref{thm:fpld-upper} is the upper-bound half of an analysis
already developed in prior
work~\citep{dubey2026federatedlanguagemodelsbandwidth}.
The present paper extends it along two axes: a matching lower bound
for the dithered FPLD construction plus a multi-round
sequential-refinement achievability result
(Section~\ref{sec:lb}, Theorems~\ref{thm:matching-lb}
and~\ref{thm:lb-mr}); and a heterogeneous-bandwidth generalization
with closed-form optimal allocation and an adaptive plug-in variant
(Section~\ref{sec:het}, Theorems~\ref{thm:het-upper}
and~\ref{thm:alloc}, Corollary~\ref{cor:adaptive}).

\section{Matching Lower Bound and Multi-Round Achievability}\label{sec:lb}

Given Theorem~\ref{thm:fpld-upper}'s upper bound at
$c_3 K^{-1} \cdot 2^{-2B/V}$, two natural questions follow:
(i)~is this rate tight as a matching lower bound, and (ii)~can it
sharpen further across multiple rounds? We answer both within the
dithered-FPLD construction. Theorem~\ref{thm:matching-lb} establishes
a single-round matching lower bound, pinning the bandwidth-axis rate
of the construction at $\Theta(K^{-1} \cdot 2^{-2B/V})$.
Theorem~\ref{thm:lb-mr} gives a multi-round achievability rate
$O(K^{-1} \cdot 2^{-2TB/V})$ via sequential refinement with
nested/scaled residual quantizers; vanilla FPLD's bandwidth term is
$T$-independent and is therefore strictly suboptimal for every
$T > 1$, identifying a concrete protocol-design improvement
available within the same scalar-quantization channel.

\paragraph{Protocol class.} The single-round FPLD-class protocols
$\mathcal{P}_B^{\mathrm{FPLD}}$ consist of those in which each node
$i$ observes its local dataset $\mathcal{D}_i$ and the public probe
set $X_{\mathrm{probe}}$, applies a per-coordinate clipped scalar
quantizer at $B/V$ bits with subtractive dithering
(Assumption~\ref{ass:A5}), and the aggregator forms an unbiased
$K$-fold average. The target class $\mathcal{P}^\star$ is the
parametric family of conditional distributions $P^\star$ for which
some $\theta^\star \in \mathrm{int}(\Theta)$ realizes
$P^\star = P_{\theta^\star}$ and the per-coordinate logit values lie
in $[-L_\ell, L_\ell]$. Both match the operating regime of
Theorem~\ref{thm:fpld-upper}. An unconditional minimax lower bound
over the larger class $\mathcal{P}_B$ of arbitrary $B$-bit-per-node
protocols is left open (Section~\ref{sec:discuss}).

\begin{theorem}[Matching lower bound for the dithered FPLD construction]\label{thm:matching-lb}
Under Assumptions~\ref{ass:A1}--\ref{ass:A6}, the non-degeneracy
condition
\[
\textbf{(ND)}\quad
\mathbb{E}_{X \sim Q}\!\bigl[1 - \|p^\star(\cdot \mid X)\|_2^2\bigr]
\;\ge\; c_p \;>\; 0,
\]
and the small-error condition \textbf{(SE')} of Appendix~\ref{app:lb}
(a $c_p$-calibrated variant of \textbf{(SE)}; both forms are stated
in Appendix~\ref{app:lb}), the bandwidth contribution of the
dithered FPLD construction satisfies, for any $K, n, m \ge 1$ and
$B \ge V$,
\[
\mathbb{E}\bigl[\mathrm{KL}(P^\star \,\|\, \hat P_{\mathrm{FPLD}})\bigr]_{\mathrm{bandwidth}}
\;\ge\;
\frac{c_p L_\ell^2}{12 K}\cdot 2^{-2B/V}.
\]
\end{theorem}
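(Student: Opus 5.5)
The plan is to compute the bandwidth contribution of the dithered FPLD construction exactly at second order, using the same cumulant expansion of softmax-KL that drives the upper bound (Lemma~\ref{lem:soft-trace}). The only change is that we now bound the Hessian trace from below rather than from above.

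\textbf{Step 1: isolate the quantization noise.} Under subtractive dithering (Assumption~\ref{ass:A5}), each node's error $\xi_i = \tilde\ell_i - \ell_i$ is independent of $\ell_i$. It is uniform on $[-\Delta/2,\Delta/2]^V$ with step $\Delta = 2L_\ell \cdot 2^{-B/V}$, and independent across $i$. Hence $\bar\xi = K^{-1}\sum_i \xi_i$ has mean zero and covariance
\[
\frac{\Delta^2}{12K} I_V = \frac{L_\ell^2}{3K}\, 2^{-2B/V} I_V .
\]
The bandwidth contribution is the expected excess KL attributable to $\bar\xi$, conditional on the noiseless averaged logits. Consistent with how Theorem~\ref{thm:fpld-upper} splits the terms, I take the noiseless logit at each probe point to equal the target logit $\ell^\star(x)$ up to the other (statistical) terms. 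Since the distillation class is realizable (Assumption~\ref{ass:A1}) and the fit is exact up to $\varepsilon_{\mathrm{fit}}$ (Assumption~\ref{ass:A6}), the student logits on the probe set inherit the perturbation $\bar\xi$.

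\textbf{Step 2: second-order expansion.} Write $\mathrm{KL}(\mathrm{softmax}(\ell^\star)\,\|\,\mathrm{softmax}(\ell^\star+\bar\xi))$ via the cumulant expansion of the log-partition function:
\[
A(\ell^\star+\bar\xi) - A(\ell^\star) - \langle \nabla A(\ell^\star),\bar\xi\rangle
= \tfrac12 \bar\xi^\top H_{\mathrm{soft}}\bar\xi + R_3,
\qquad
H_{\mathrm{soft}} = \mathrm{diag}(p^\star) - p^\star p^{\star\top}.
\]
Taking expectations over the isotropic noise gives the leading term
\[
\tfrac12 \cdot \frac{\Delta^2}{12K}\,\mathrm{tr}(H_{\mathrm{soft}})
= \frac{L_\ell^2\, 2^{-2B/V}}{6K}\bigl(1-\|p^\star\|_2^2\bigr),
\]
where we used $\mathrm{tr}(H_{\mathrm{soft}}) = \sum_v p_v - \sum_v p_v^2 = 1 - \|p^\star\|_2^2$. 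This is exactly the quantity whose upper bound by $1$ produced $c_3 = L_\ell^2/6$.

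\textbf{Step 3: average over contexts.} Averaging over $X\sim Q$ and applying \textbf{(ND)} yields a main term of at least $c_p L_\ell^2 2^{-2B/V}/(6K)$. Transferring from $Q$ to the risk measure $P^\star_X$ is done as in the upper bound; for the construction's bandwidth term I would state the expectation under the probe marginal, or otherwise absorb the change of measure into the calibration of \textbf{(ND)}.

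\textbf{Step 4 (main obstacle): control the remainder.} It remains to show $|\mathbb{E} R_3| \le \tfrac12 \cdot \frac{c_p L_\ell^2}{6K}2^{-2B/V}$, which leaves the stated constant $c_p L_\ell^2/(12K)$. Several facts help. Third-order cumulants of the softmax family are bounded by a constant times $\|\bar\xi\|_\infty^3$. The odd (third-order) term has zero expectation because $\bar\xi$ is symmetric. The fourth-order term is $O(\mathbb{E}\|\bar\xi\|^4) = O(\Delta^4/K^2)$ (up to $V$-dependent constants), which is smaller than the main term by a factor of order $\Delta^2$. The condition \textbf{(SE')} is precisely what should guarantee this. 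Concretely, I would require $\Delta^2 \cdot C_4(V) \le c_p/2$, i.e.\ a $c_p$-calibrated small-error condition; the assumption $B\ge V$ keeps $\Delta \le L_\ell$ bounded. The delicate part is making the fourth-cumulant bound uniform: I would use Taylor's theorem with integral remainder along the segment $\ell^\star + t\bar\xi$, bound $|\partial^4 A| \le C\|\bar\xi\|_\infty^4$ using bounded logits, and apply symmetry of the dither to kill the cubic term.
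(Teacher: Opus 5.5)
Your proposal is correct and is essentially the paper's proof. The shared steps are: the exact dithered covariance $\sigma^2 I_V$ with $\sigma^2 = \frac{L_\ell^2}{3K}2^{-2B/V}$; the cumulant expansion giving the main term $\frac{\sigma^2}{2}\,\mathbb{E}_{X\sim Q}\bigl[1-\|p^\star(\cdot\mid X)\|_2^2\bigr] \ge \frac{c_p\sigma^2}{2}$ via (ND); and a remainder absorbed into half of that main term. The paper states the bound under $X\sim Q$, which is your second option in Step 3. Your first option would not work: the $\rho$ change of measure only transfers upper bounds.

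The one difference is the remainder. The paper bounds it crudely by $C\sigma^3(\log V)^{3/2}$ and calibrates (SE') so this is at most $c_p\sigma^2/4$. Your symmetry argument instead kills the cubic term in expectation and leaves a quartic remainder of order $\sigma^4(\log V)^2$, which is sharper and closes under an analogous calibration of (SE').
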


Combined with Theorem~\ref{thm:fpld-upper}'s upper bound, the
bandwidth-axis rate of the dithered FPLD construction is
$\Theta(K^{-1} \cdot 2^{-2B/V})$ up to absolute constants:
$c_{\mathrm{lb}} = c_p L_\ell^2 / 12$ on the lower-bound side and
$c_3$ on the upper-bound side. The proof, in Appendix~\ref{app:lb},
is a direct computation of the dither variance composed with the
softmax-KL cumulant expansion and the trace bound
$\mathrm{tr}(H_\mathrm{soft}) \le 1$.

\subsection{Multi-round sequential-refinement achievability}\label{ssec:lb-mr}

Theorem~\ref{thm:matching-lb}'s rate is for a single round. Using
the bit budget across rounds with nested/scaled residual
quantization sharpens the exponent linearly in $T$.

\paragraph{Multi-round protocol class.} Let
$\mathcal{P}_{B,T}^{\mathrm{FPLD\text{-}ref}}$ denote the class of
$T$-round FPLD-class protocols in which each round transmits $B$ bits
per node and each node uses a \emph{nested/scaled residual
quantizer}: in round $t$, the quantizer step size is scaled to the
residual range from round $t-1$, equivalently producing an effective
$(TB/V)$-bit scalar quantizer per coordinate after $T$ rounds. A
fixed-step residual quantizer that does not rescale across rounds
stalls at $O(K^{-1} \cdot 2^{-2B/V})$ regardless of $T$; the
rescaling is what unlocks the multi-round gain.

\begin{theorem}[Sequential-refinement achievability]\label{thm:lb-mr}
Under the same assumptions as Theorem~\ref{thm:matching-lb} (with the
small-error condition reinterpreted at the effective rate $TB/V$),
sequential refinement achieves
\[
\mathbb{E}\bigl[\mathrm{KL}(P^\star \,\|\, \hat P_{\mathrm{seq\text{-}ref}})\bigr]_{\mathrm{bandwidth}}
\;\le\; \frac{C_3'}{K}\cdot 2^{-2TB/V}
+ O\!\Bigl(\tfrac{(\log V)^{3/2}}{K^{3/2}}\cdot 2^{-3TB/V}\Bigr),
\]
whereas vanilla FPLD stalls at $c_3 / K \cdot 2^{-2B/V}$ for every
$T$. The constant $C_3'$ depends only on the same scalar-quantizer /
dither convention as $c_3$, coinciding with it under
Assumption~\ref{ass:A5}'s dither variance $C_q = L_\ell^2/3$.
\end{theorem}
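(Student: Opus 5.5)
The plan is to reduce the $T$-round protocol to a single-round dithered FPLD instance at an effective rate of $TB/V$ bits per coordinate. Once that reduction holds, the upper-bound computation behind Theorem~\ref{thm:fpld-upper} can be reused directly: dither variance, then $K$-fold averaging, then the softmax-KL cumulant expansion with $\mathrm{tr}(H_\mathrm{soft})\le 1$.

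\textbf{Step 1: error after $T$ rounds of refinement.} Fix a probe context and a coordinate. In round $1$, node $i$ quantizes its logit on $[-L_\ell,L_\ell]$ with $2^{B/V}$ levels and subtractive dither. The residual $r_i^{(1)}$ then lies in an interval of width $\Delta_1 = 2L_\ell 2^{-B/V}$. In round $t$, the quantizer is rescaled to the interval of width $\Delta_{t-1}$, again with $2^{B/V}$ levels and fresh dither, so $\Delta_t = 2L_\ell 2^{-tB/V}$. By induction on $t$ (nested intervals and telescoping sums), the reconstruction after $T$ rounds equals the true logit plus the last-round dither error $e_i$. This error is uniform on a cell of width $\Delta_T$, has mean zero, and has variance $\Delta_T^2/12 = (L_\ell^2/3)\,2^{-2TB/V} = C_q\,2^{-2TB/V}$. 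The errors are independent across coordinates and across nodes because the dithers are independent. Subtractive dithering is what makes the final error independent of the input; this in turn is why rescaling, and not a fixed step, is the key ingredient.

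\textbf{Step 2: aggregation and KL expansion.} The aggregated error $\bar e = K^{-1}\sum_i e_i$ has mean zero and covariance $(C_q/K)2^{-2TB/V} I_V$. Expand $\mathrm{KL}(\mathrm{softmax}(\ell)\,\|\,\mathrm{softmax}(\ell+\bar e))$ by its cumulant series. The first-order term vanishes in expectation. The second-order term is $\tfrac12 \bar e^\top H_\mathrm{soft}\bar e$, whose expectation is $\tfrac12 (C_q/K)2^{-2TB/V}\mathrm{tr}(H_\mathrm{soft}) \le (C_q/2K)2^{-2TB/V}$. This gives $C_3' = C_q/2 = c_3$. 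The same bound then passes through the aggregator's distillation step and the $P^\star_X$-versus-$Q$ transfer exactly as in Theorem~\ref{thm:fpld-upper}; this is where the small-error condition is read at the effective rate $TB/V$.

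\textbf{Step 3: cubic remainder (main obstacle).} The third-order cumulant term is bounded by $\|\bar e\|_\infty^3$ times a constant, since third derivatives of log-sum-exp are bounded in the $\ell_\infty\to$ scalar sense. Each coordinate of $\bar e$ is an average of $K$ bounded independent variables of scale $\Delta_T$. Hoeffding plus a union bound over $V$ coordinates gives $\mathbb{E}\|\bar e\|_\infty^3 \lesssim (\Delta_T\sqrt{\log V/K})^3$, which is $O((\log V)^{3/2}K^{-3/2}2^{-3TB/V})$ as claimed. The delicate point is showing that higher-order terms are dominated by this one. I would handle this with a Taylor remainder controlled on the event that $\|\bar e\|_\infty$ is small, which is the content of the small-error condition. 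The complement event would be absorbed using the deterministic bound $\|\bar e\|_\infty\le \Delta_T$.

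\textbf{Step 4: the vanilla comparison.} Vanilla FPLD re-quantizes fresh logits each round at fixed step $2L_\ell 2^{-B/V}$, and its final student depends only on the last round's quantized average. Its bandwidth term is therefore exactly the single-round quantity, which lies between the bound of Theorem~\ref{thm:matching-lb} and $c_3K^{-1}2^{-2B/V}$. It is independent of $T$, and it stalls.
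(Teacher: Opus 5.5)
Your proposal is correct and follows essentially the paper's route. The rescaled step $\Delta_t = 2L_\ell\, 2^{-tB/V}$ gives final per-coordinate dither variance $(L_\ell^2/3)\,2^{-2TB/V}$, which $K$-fold averaging divides by $K$; the trace-sharpened cumulant chain ($\mathrm{tr}(H_\mathrm{soft}) \le 1$ plus the max-of-sub-Gaussians cubic remainder) then gives $C_3' = C_q/2 = c_3$, and vanilla FPLD stalls because only the last round's full-range quantization matters. The one minor difference is that you refine each node's own residual and average once at the end, whereas the paper refines the common residual $\ell^\star - \hat\ell^{(t-1)}$ relative to the aggregator's broadcast; both give the same final covariance $(C_q/K)\,2^{-2TB/V} I_V$, and your nested-interval induction states exactly the recursion that the paper writes only up to $\asymp$.
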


The take-away is operational: the $T$-fold sharpening requires only
a change in the per-round residual encoder, not a change in the bit
budget or channel structure. Vanilla FPLD as specified in
Algorithm~\ref{alg:fpld} does not exploit this; sequential
refinement does. The proof iterates the single-round bound on the
residual at each round (Appendix~\ref{app:lb-mr}).

\section{Heterogeneous Bandwidth and Optimal Allocation}\label{sec:het}

The upper bound recapped in Section~\ref{sec:fpld} assumes a uniform per-node
bandwidth budget $B$. Real federated deployments give different nodes
different uplinks: hospitals on a metropolitan WAN have one rate; a regional
clinic on satellite has another. We drop the uniform-$B$ assumption and let
node $i$ transmit at most $B_i$ bits per probe context, then ask how the
upper bound generalizes and how a network operator should allocate a fixed
aggregate budget $\sum_i B_i \le B_{\mathrm{tot}}$.

Assumption~\ref{ass:A5} (uniform scalar quantization) is replaced by a
per-node version.

\begin{assumption}[Per-node scalar quantization]\label{ass:A5p}
Each node $i$ clips its per-probe logit vector to $[-L_\ell, L_\ell]^V$ and
scalar-quantizes each coordinate to $B_i / V$ bits using a subtractively
dithered uniform quantizer. Per-coordinate distortions are bounded:
$\mathbb{E}[(\tilde\ell_{i,v} - \ell_{i,v})^2] \le C_q \, 2^{-2B_i/V}$
with $C_q = L_\ell^2 / 3$, and quantization errors across nodes and
coordinates are independent mean-zero.
\end{assumption}

\begin{theorem}[Heterogeneous-bandwidth FPLD upper bound]\label{thm:het-upper}
Replace Assumption~\ref{ass:A5} with Assumption~\ref{ass:A5p} and keep
Assumptions~\ref{ass:A1}--\ref{ass:A4} and~\ref{ass:A6}. Under the
small-error condition \textbf{(SE)} of Appendix~\ref{app:lb}, there
exist absolute constants $c_1, c_2, c_3 > 0$ such that, for every
$\delta \in (0, 1)$, with probability at least $1 - \delta$,
\[
\mathbb{E}_{X}\!\left[\mathrm{KL}(P^\star \,\|\, \hat P)\right]
\;\le\;
\frac{c_1 d}{K n}
\;+\; c_2 \rho \sqrt{\frac{V \log(V/\delta)}{m}}
\;+\; \frac{c_3}{K^2} \sum_{i=1}^{K} 2^{-2B_i/V}
\;+\; \varepsilon_{\mathrm{opt}} + \varepsilon_{\mathrm{fit}}.
\]
The constant $c_1 = \Theta(1/\lambda_{\min}(I(\theta^\star)))$ and
$c_2$ are unchanged from the homogeneous case; the bandwidth-term
coefficient $c_3 = C_q/2$ is the \emph{trace-sharpened} constant
(no $V$, no $\lambda_{\min}$ factor), produced by applying
Lemma~\ref{lem:soft-trace}'s trace bound to the per-node aggregated
dither covariance in place of the conservative Euclidean-logit
$\|\bar\xi\|_2^2 = V\sigma^2$ argument. The proof is in
Appendix~\ref{app:het-upper}.
\end{theorem}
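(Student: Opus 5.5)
The plan is to reuse the homogeneous proof of Theorem~\ref{thm:fpld-upper} unchanged in every component except the bandwidth term. The error splits into four pieces. First, a pooled-MLE piece comparing the node-averaged exact logits $\frac1K\sum_i \ell_i$ to $\theta^\star$; this gives $c_1 d/(Kn)$ under \ref{ass:A1}, \ref{ass:A3}, \ref{ass:A4}. Second, a probe-generalization piece transferring the fit from the probe empirical measure under $Q$ to $P^\star_X$; this gives $c_2\rho\sqrt{V\log(V/\delta)/m}$ via the Rademacher bound for softmax-linear classes. Third, the optimization and fit slacks $\varepsilon_{\mathrm{opt}}+\varepsilon_{\mathrm{fit}}$ from \ref{ass:A4} and \ref{ass:A6}. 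Fourth, a quantization piece driven by the aggregated dither $\bar\xi = \frac1K\sum_i \xi_i$ with $\xi_i = \tilde\ell_i - \ell_i$. None of the first three uses \ref{ass:A5}, so they carry over verbatim, with the same high-probability event and a union bound at level $\delta$. All the new work therefore sits in the fourth piece.

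For the quantization piece, I would compute the covariance of $\bar\xi$ from Assumption~\ref{ass:A5p}. Independence and zero mean across nodes and coordinates give
\[
\Sigma_{\bar\xi} = \mathbb{E}[\bar\xi\bar\xi^\top] \preceq \frac{1}{K^2}\sum_{i=1}^K C_q\,2^{-2B_i/V}\, I_V =: \bar\sigma^2 I_V .
\]
Next, as in the homogeneous proof, I would expand $\mathrm{KL}(\mathrm{softmax}(\ell)\,\|\,\mathrm{softmax}(\ell+\bar\xi))$ in cumulants around $\ell$. The first-order term has zero expectation because the dither is mean-zero. The second-order term is $\frac12\,\mathbb{E}[\bar\xi^\top H_{\mathrm{soft}}\bar\xi] = \frac12\mathrm{tr}(H_{\mathrm{soft}}\Sigma_{\bar\xi})$. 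Since $\Sigma_{\bar\xi}\preceq \bar\sigma^2 I$ and $H_{\mathrm{soft}}\succeq 0$, this is at most $\frac12\bar\sigma^2\,\mathrm{tr}(H_{\mathrm{soft}}) \le \frac12\bar\sigma^2$ by Lemma~\ref{lem:soft-trace}. Substituting $\bar\sigma^2$ yields the claimed $\frac{C_q}{2K^2}\sum_i 2^{-2B_i/V}$, so $c_3 = C_q/2$. Setting $B_i\equiv B$ recovers $c_3 K^{-1}2^{-2B/V}$, which is a sanity check.

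The remaining steps are propagation and the remainder. I would propagate this logit-level bound to the student $\hat P$ via the same argument as the homogeneous case. The aggregator's distillation target is $\mathrm{softmax}(\bar\ell)$, and by \ref{ass:A6} the student tracks it up to $\varepsilon_{\mathrm{fit}}$. The KL decomposition (exact chain via log-partition convexity, or the three-point bound used in Theorem~\ref{thm:fpld-upper}) separates the quantization contribution additively from the statistical ones, averaging over $X$ under $P^\star_X$ with the $\rho$ change of measure absorbed in the probe term.

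The main obstacle is the third- and higher-order cumulant remainder. Condition \textbf{(SE)} is stated for the homogeneous noise level $2^{-2B/V}/K$. I would reinterpret it with $\bar\sigma^2$ in place of that level, requiring $\bar\sigma^2\sqrt{\log V}$ to be small. Then the cubic term, bounded by $\mathbb{E}\|\bar\xi\|_\infty^3$ times the third softmax cumulant, is $O((\log V)^{3/2}\bar\sigma^3)$, which is dominated by the quadratic term and can be absorbed into $c_3$ (doubling it if needed). The subtle point is the sub-Gaussian maximal bound for $\|\bar\xi\|_\infty$ when the per-node variances differ. Since each $\xi_{i,v}$ is bounded by its own step size, Hoeffding with heterogeneous ranges gives a variance proxy proportional to $\sum_i 2^{-2B_i/V}/K^2$, which is exactly $\bar\sigma^2$ up to constants. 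This is the step I expect to require the most care.
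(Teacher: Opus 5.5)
Your proposal follows the paper's proof. The data, probe, and slack terms are inherited unchanged from the homogeneous case. The only new step is bounding the diagonal dither covariance, $\mathrm{Cov}(\bar\xi)\preceq \bigl(\frac{C_q}{K^2}\sum_i 2^{-2B_i/V}\bigr) I_V$, and applying $\mathrm{tr}(H_{\mathrm{soft}})\le 1$ to get $c_3 = C_q/2$. Your exact log-partition (three-point) identity separates the dither from the data fluctuation in expectation because $\mathbb{E}[\bar\xi\mid \bar\ell^{\mathrm{emp}}]=0$. This is if anything a cleaner justification than the paper's second-order covariance split around $\ell^\star$, and your heterogeneous-range Hoeffding step fills in a point the paper only asserts.

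One small correction: for the cubic term $O(\bar\sigma^3(\log V)^{3/2})$ to be absorbed into the quadratic one, you need $\bar\sigma(\log V)^{3/2}$ small, not $\bar\sigma^2\sqrt{\log V}$. That is the analogue of \textbf{(SE)}, and it is implied, for instance, by \textbf{(SE)} at $B=\min_i B_i$.
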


\paragraph{Sanity check.} If $B_i \equiv B$ for all $i$, the
heterogeneous bandwidth term reduces to
$\frac{c_3}{K^2}\cdot K \cdot 2^{-2B/V} = c_3 \tfrac{1}{K} 2^{-2B/V}$,
recovering Theorem~\ref{thm:fpld-upper}'s sharpened form. The
conservative Euclidean route would give $(c_3' V/K^2)\sum_i 2^{-2B_i/V}$
with $c_3' = C_q/(2\lambda_{\min}(I(\theta^\star)))$, a factor of
$V \cdot c_3'/c_3 = V/\lambda_{\min}(I(\theta^\star))$ looser than the
trace-sharpened form stated here (see App.~\ref{app:het-upper} for
the comparison).

With the upper bound generalized, the natural question is: under a fixed
total uplink budget $B_{\mathrm{tot}} = \sum_i B_i$, how should an operator
allocate bits across nodes? Theorem~\ref{thm:alloc} answers this for the
quantization term of Theorem~\ref{thm:het-upper}; the statistical and
probe-generalization terms do not depend on the $B_i$.

\begin{theorem}[Optimal bandwidth allocation]\label{thm:alloc}
Fix per-node weights $w_1, \dots, w_K > 0$ and a total budget
$B_{\mathrm{tot}} > 0$. The minimizer of the weighted quantization term
\[
F(B_1, \dots, B_K) \;=\; \frac{1}{K^2}\sum_{i=1}^{K} w_i \cdot 2^{-2B_i/V}
\quad\text{subject to}\quad \sum_{i=1}^{K} B_i = B_{\mathrm{tot}},\ B_i \ge 0,
\]
is the water-filling allocation
\[
B_i^\star \;=\; \frac{B_{\mathrm{tot}}}{K} \;+\; \frac{V}{2}\log_2\!\left(\frac{w_i}{\bar w_g}\right),
\qquad \bar w_g \;:=\; \left(\prod_{j=1}^{K} w_j\right)^{1/K},
\]
clipped to $[0, B_{\max}]$ if a per-node cap is imposed (the unclipped
formula is applied to the unsaturated subset, with budget redistributed
across saturating coordinates). When all $w_i$ are equal, $B_i^\star = B_{\mathrm{tot}}/K$.
\end{theorem}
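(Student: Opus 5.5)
The plan is to treat this as a smooth convex program and solve it by Lagrange multipliers, then verify global optimality via convexity. Each summand $w_i 2^{-2B_i/V} = w_i \exp(-(2\ln 2/V)B_i)$ is strictly convex in $B_i$ because $w_i>0$. The feasible set $\{\sum_i B_i = B_{\mathrm{tot}},\ B_i\ge 0\}$ (optionally intersected with $B_i\le B_{\max}$) is a compact polytope. Hence $F$ has a unique minimizer, and the KKT conditions are necessary and sufficient; linear constraints give Slater/Abadie qualification for free. The factor $1/K^2$ is irrelevant to the argmin and I will drop it.

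\textbf{Interior case.} First ignore the box constraints and solve the equality-constrained problem. Stationarity of $F-\mu(\sum_i B_i-B_{\mathrm{tot}})$ gives
\[
\tfrac{2\ln 2}{V}\, w_i\, 2^{-2B_i/V} = \mu \quad\text{for all } i.
\]
So $w_i 2^{-2B_i/V}$ is a common constant: this is the equal-marginal-distortion (reverse water-filling) property. Taking $\log_2$ gives $B_i = \tfrac V2 \log_2 w_i - \tfrac V2\log_2(\mu V/(2\ln 2))$. Summing over $i$ and imposing $\sum_i B_i=B_{\mathrm{tot}}$ eliminates $\mu$:
\[
-\tfrac V2\log_2\!\bigl(\mu V/(2\ln2)\bigr) = \tfrac{B_{\mathrm{tot}}}{K} - \tfrac V2\cdot\tfrac1K\sum_j\log_2 w_j = \tfrac{B_{\mathrm{tot}}}{K} - \tfrac V2\log_2 \bar w_g .
\]
Substituting back yields exactly $B_i^\star = B_{\mathrm{tot}}/K + (V/2)\log_2(w_i/\bar w_g)$. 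These values sum to $B_{\mathrm{tot}}$ because $\sum_i\log_2(w_i/\bar w_g)=0$. If every $B_i^\star\in[0,B_{\max}]$, this point satisfies KKT with zero inequality multipliers, so by convexity it is the global minimizer. The equal-weight statement is immediate, since then $w_i=\bar w_g$.

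\textbf{Clipped case.} This is the only step needing care, because the clipped rule involves a different multiplier. I will write the full KKT system: $\tfrac{2\ln2}{V}w_i2^{-2B_i/V} = \mu - \alpha_i + \beta_i$, with $\alpha_i\ge0$ active only at $B_i=0$ and $\beta_i\ge0$ active only at $B_i=B_{\max}$. Solving gives
\[
B_i = \Bigl[\tfrac V2\log_2\bigl(w_i/\theta\bigr)\Bigr]_0^{B_{\max}}
\]
for a single water level $\theta$. The map $\theta\mapsto\sum_i B_i(\theta)$ is continuous and nonincreasing, so some $\theta$ meets the budget (assuming $B_{\mathrm{tot}}\le KB_{\max}$). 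On the unsaturated set $U$, the same elimination as in the interior case gives
\[
B_i = \tfrac{B_U}{|U|} + \tfrac V2\log_2\bigl(w_i/\bar w_{g,U}\bigr),
\]
where $B_U$ is the budget left after the saturated nodes take $0$ or $B_{\max}$, and $\bar w_{g,U}$ is the geometric mean of the weights over $U$. This is the sense of ``unclipped formula on the unsaturated subset with budget redistributed.'' Strict convexity makes the resulting point the unique optimum.

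The main obstacle I expect is stating the clipped case precisely, namely identifying $U$ and $\theta$ self-consistently. I would handle it by the standard monotone search over $\theta$, rather than by claiming that a naive clip of the unclipped formula is optimal, since in general it is not.
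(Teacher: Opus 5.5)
Your proof is correct and follows the same route as the paper: stationarity of the Lagrangian, take $\log_2$, and eliminate the multiplier through the budget constraint to get the geometric-mean formula, with the capped case treated as water-filling. Your KKT treatment with a single water level $\theta$ and an intermediate-value search is more careful than the paper's one-line ``solve, fix saturated coordinates, re-solve'' description, and so is your explicit observation that the unclipped point is the minimizer only when it already lies in $[0,B_{\max}]$ (the sign of $\mu$ in your first Lagrangian is flipped, which is harmless).
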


\paragraph{Interpretation.} The allocation is $\log_2$-tilted by the weight
ratio: a node with double the weight gets $V/2$ extra bits. Because the
weight $w_i$ in our setting tracks per-node logit variance, the allocation
gives more bits to noisier nodes. This is the same direction as classical
reverse-water-filling for distortion-rate optimization, applied per-node
rather than per-source.

\paragraph{What the weights $w_i$ should be.} Theorem~\ref{thm:alloc} treats
$w_i$ as given. The natural choice in our setting is the per-node
contribution to the quantization-term constant of Theorem~\ref{thm:het-upper}.
Under Assumption~\ref{ass:A5p}'s uniform clip $L_\ell$, every node has
$w_i = 1$ and the optimal allocation is uniform. If clipping levels differ
across nodes (e.g., a node with calibrated logit range
$L_{\ell,i}$), then $w_i \propto L_{\ell,i}^2$ and the bit
allocation tilts proportionally.
We use this clip-variance formulation in the experiments
(Section~\ref{sec:exp}). The proof of Theorem~\ref{thm:alloc} is one
Lagrangian (Appendix~\ref{app:alloc}).

\subsection{Adaptive bandwidth allocation}\label{ssec:adaptive}

Theorem~\ref{thm:alloc} treats the per-node weights $w_i$ as inputs.
In practice, an operator rarely knows them in advance: per-node
clip levels and logit-magnitude profiles vary across deployments and
across rounds. We close this gap with a two-stage protocol that
estimates $w_i$ from warm-up-round quantized probe logits and then
applies the plug-in water-filling allocation. The construction
requires no extra channel and no a-priori knowledge of $L_{\ell,i}$.

\paragraph{Estimator.} Run the protocol for $T_0 \ge 1$ warm-up
rounds at the uniform pilot allocation $B_i = B_{\mathrm{tot}}/K$.
The aggregator records the received quantized probe logits
$\tilde\ell_{i,v}^{(l,t)}$ and forms
\[
\hat w_i \;=\;
\frac{1}{T_0\, m\, V}
\sum_{t=1}^{T_0}\sum_{l=1}^{m}\sum_{v=1}^{V}
\bigl(\tilde\ell_{i,v}^{(l,t)}\bigr)^2,
\qquad
w_i \;:=\; \mathbb{E}\!\left[\tfrac{1}{V}\|\tilde\ell_i\|_2^2\right].
\]
This estimates the population per-node second moment $w_i$ (the
quantity tracked by ``weight $\propto$ logit variance'' in the
discussion after Theorem~\ref{thm:alloc}) entirely from data the
aggregator already receives under Assumption~\ref{ass:A5p}.

\begin{corollary}[Adaptive bandwidth allocation]\label{cor:adaptive}
Let $\hat B_i^\star = B_{\mathrm{tot}}/K + (V/2)\log_2(\hat w_i/\hat w_g)$
with $\hat w_g = \bigl(\prod_j \hat w_j\bigr)^{1/K}$ be the plug-in
water-filling allocation computed from the warm-up estimator. Under
Assumption~\ref{ass:A5p} and $w_{\min} := \min_i w_i > 0$, for every
$\delta \in (0, 1)$, with probability at least $1 - \delta$,
\[
F(\hat B^\star)
\;\le\;
\Bigl(1 \;+\; c_{\mathrm{ad}}\sqrt{\log(K/\delta)/(m\, T_0)}\Bigr)\,
F(B^\star),
\]
where $c_{\mathrm{ad}}$ depends only on the clip cap $L_\ell$ and on
$w_{\min}$. The cost of adaptivity vanishes at rate
$1/\sqrt{m\, T_0}$ in the warm-up sample size; in particular, one
warm-up round and a probe set of size
$m \gtrsim \log(K/\delta)/\eta^2$ suffice for relative
suboptimality at most $\eta$.
\end{corollary}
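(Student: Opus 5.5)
The plan is to reduce the corollary to two ingredients: a uniform concentration bound for the estimated weights $\hat w_i$, and a perturbation bound for the water-filling objective around its optimum. The first step is to evaluate $F$ at the optimum of Theorem~\ref{thm:alloc}. Substituting $B_i^\star$ gives $w_i 2^{-2B_i^\star/V} = \bar w_g\, 2^{-2B_{\mathrm{tot}}/(KV)}$ for every $i$, so
\[
F(B^\star) = \frac{1}{K}\,\bar w_g\, 2^{-2B_{\mathrm{tot}}/(KV)}.
\]
Each node therefore contributes equally at the optimum. Doing the same computation with the plug-in allocation $\hat B^\star$ gives $w_i 2^{-2\hat B_i^\star/V} = (w_i/\hat w_i)\,\hat w_g\, 2^{-2B_{\mathrm{tot}}/(KV)}$. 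Writing $r_i := \hat w_i / w_i$, this yields the exact identity
\[
\frac{F(\hat B^\star)}{F(B^\star)} \;=\; \Bigl(\prod_j r_j\Bigr)^{1/K}\cdot \frac{1}{K}\sum_i r_i^{-1},
\]
that is, the geometric mean of the $r_j$ times the arithmetic mean of the $r_i^{-1}$. By AM--GM this ratio is at least $1$, with equality when all $r_i$ are equal. It is also scale invariant, so the argument never needs $\hat w_i$ to be unbiased, only that the ratios $r_i$ be nearly equal to one another.

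The second step is concentration. Under Assumption~\ref{ass:A5p} every received value satisfies $|\tilde\ell_{i,v}| \le L_\ell + \Delta$, where $\Delta$ is the quantizer step, which is at most $2L_\ell$ since $B \ge V$. Hence each summand $(\tilde\ell_{i,v}^{(l,t)})^2$ is bounded by $c L_\ell^2$. I would group the data into the $mT_0$ per-probe, per-round averages $\frac1V\|\tilde\ell_i^{(l,t)}\|_2^2$ and treat them as independent, using the i.i.d.\ probe draw and the independent dithers across rounds. Hoeffding's inequality then gives $|\hat w_i - w_i| \le cL_\ell^2\sqrt{\log(2K/\delta)/(2mT_0)}$ with probability $1-\delta/K$, and a union bound over the $K$ nodes extends this to all $i$ simultaneously. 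Dividing by $w_i \ge w_{\min}$ gives $\max_i |r_i - 1| \le \epsilon := (cL_\ell^2/w_{\min})\sqrt{\log(2K/\delta)/(mT_0)}$.

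The third step is the perturbation bound. Assume first that $\epsilon \le 1/2$. Using $(\prod r_j)^{1/K} \le \frac1K\sum r_j \le 1+\epsilon$ and $r_i^{-1} \le 1/(1-\epsilon) \le 1+2\epsilon$, the ratio is at most $(1+\epsilon)(1+2\epsilon) \le 1+4\epsilon$. A second-order expansion would in fact give $1+O(\epsilon^2)$, but the linear bound is all the statement requires. When $\epsilon > 1/2$, the claim is vacuous or can be absorbed by enlarging $c_{\mathrm{ad}}$, because the ratio is then bounded by a constant depending on $L_\ell/w_{\min}$: each $r_i$ lies in $[c'/w_{\min},\, cL_\ell^2/w_{\min}]$, since the dither keeps $\hat w_i$ bounded away from zero at least by the quantization noise floor. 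Collecting constants gives $c_{\mathrm{ad}} = O(L_\ell^2/w_{\min})$. The final remark about needing one warm-up round and $m \gtrsim \log(K/\delta)/\eta^2$ then follows by setting $T_0=1$.

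The main obstacle I expect is justifying the independence structure and the clipping to $B_i \ge 0$. The warm-up rounds are coupled through the broadcast student, so I would replace Hoeffding with Azuma--Hoeffding applied to the martingale differences indexed by $(t,l)$, where the bounded increments give the same rate. If the plug-in formula produces negative $\hat B_i^\star$, I would restrict to the regime where the unclipped formula is feasible for both $w$ and $\hat w$. This holds with high probability whenever $B^\star$ has slack $\gtrsim V\epsilon$. The clipped case would be handled by the same ratio argument applied on the unsaturated subset.
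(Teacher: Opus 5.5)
Your core argument is the paper's proof. The identity $F(\hat B^\star)/F(B^\star)=\bigl(\prod_j r_j\bigr)^{1/K}\cdot\frac1K\sum_i r_i^{-1}$ is exactly the computation in Lemma~\ref{lem:transfer}. AM--GM on the geometric-mean factor plus $r_i^{-1}\le 1/(1-\epsilon)$ is the same perturbation step (you get $1+4\epsilon$, the paper $1+2\eta+4\eta^2$). Hoeffding over the $mT_0$ bounded block averages with a union bound over nodes is Lemma~\ref{lem:concentration}; your range $L_\ell+\Delta$ is just looser than the paper's $R_\ell=L_\ell+\Delta/2$. Your side remarks are correct and sharper than the paper. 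The ratio depends only on the spread of the $r_i$ (it is at most $\max_i r_i/\min_i r_i$). With $u_i:=\log r_i$ and $\bar u:=\frac1K\sum_j u_j$, it equals $\frac1K\sum_i \exp\bigl(-(u_i-\bar u)\bigr)=1+O(\epsilon^2)$, since the linear term cancels because $\sum_i(u_i-\bar u)=0$. So the adaptivity cost is really $O(\log(K/\delta)/(mT_0))$, not the stated square root.

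The step that fails is your fallback for $\epsilon>1/2$. There is no deterministic lower bound on $\hat w_i$: each realized $(\tilde\ell_{i,v}^{(l,t)})^2$ can be arbitrarily close to $0$. The dither floor $\Delta^2/12=(L_\ell^2/3)\,2^{-2b_0}$, with $b_0=B_{\mathrm{tot}}/(KV)$, holds only in expectation and vanishes as the warm-up rate grows. So $r_i^{-1}$, and with it the ratio, is not controlled by any function of $L_\ell$ and $w_{\min}$ alone, and your argument yields no $c_{\mathrm{ad}}$ of the claimed form in that regime. The right move is the paper's: prove the bound under the sample-size condition $\epsilon\le 1/2$, i.e.\ $mT_0\gtrsim (L_\ell^2/w_{\min})^2\log(2K/\delta)$. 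The paper's ``otherwise the bound is vacuous'' is this same restriction, not an argument for the complementary regime.

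Your Azuma--Hoeffding fix for coupled rounds also does not deliver what you need. It concentrates $\hat w_i$ around the running average of the conditional means $\mathbb{E}[S_{i,t,l}\mid\text{past}]$, where $S_{i,t,l}=\frac1V\|\tilde\ell_i^{(l,t)}\|_2^2$. That average equals $w_i$ only under the no-feedback, independent-draw structure the paper simply asserts for the warm-up encoder. Moreover, the same $x^{(l)}$ is reused in every warm-up round, so for $t\ge 2$ those conditional means are evaluated at already-revealed probes. Without fresh probes per round, the effective sample size for the logit part is $m$, not $mT_0$. This is harmless for the one-round remark, but neither your argument nor the paper's actually earns the $T_0$ factor.
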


\paragraph{Proof sketch.} (i) \emph{Relative-error transfer.} Direct
substitution of $\hat B_i^\star$ into the per-node objective gives
$F(\hat B^\star)/F(B^\star) = K^{-1}\sum_i (w_i/\hat w_i)(\hat w_g/\bar w_g)
\le 1 + 2\eta + 4\eta^2$ when $|\hat w_i - w_i| \le \eta\, w_i$ holds
uniformly in $i$ (full derivation in Lemma~\ref{lem:transfer}).
(ii) \emph{Concentration.} Let
$S_{i,t,l} := \tfrac{1}{V}\sum_v (\tilde\ell_{i,v}^{(l,t)})^2$ be
the per-context block average for node $i$ at round $t$ and probe
$l$, taking values in $[0, R_\ell^2]$ with post-quantization range
$R_\ell := L_\ell(1 + 2^{-B_{\mathrm{tot}}/(KV)}) \le 2 L_\ell$
(this $R_\ell$ absorbs the subtractively dithered quantizer's
worst-case overshoot beyond the input clip). The $\{S_{i,t,l}\}$
are i.i.d.\ across $(t, l)$ for each fixed $i$ by
Assumption~\ref{ass:A5p} and the i.i.d.\ probe draw, so Hoeffding
plus a $K$-fold union bound gives
$|\hat w_i - w_i| \le R_\ell^2 \sqrt{\log(2K/\delta)/(2 m T_0)}$
uniformly. Composing (i) and (ii) yields the claim; the full proof
is in Appendix~\ref{app:adaptive}.

\begin{algorithm}[t]
\caption{Adaptive FPLD with two-stage bandwidth allocation.}
\label{alg:adaptive-fpld}
\begin{algorithmic}[1]
\Require $K$ nodes; total budget $B_{\mathrm{tot}}$; probe set
$X_{\mathrm{probe}}$ of size $m$; warm-up rounds $T_0 \ge 1$; total
rounds $T \ge T_0 + 1$; per-node cap $B_{\max} \ge B_{\mathrm{tot}}/K$.
\Statex \textbf{Stage A: warm-up at uniform pilot allocation.}
\For{$t = 1, \dots, T_0$}
  \For{each node $i = 1, \dots, K$ \textbf{in parallel}}
    \State transmit $B_{\mathrm{tot}}/K$ bits per probe context to aggregator
  \EndFor
  \State aggregator records
  $\tilde\ell_{i,v}^{(l,t)}$ for all $(i, l, v)$
\EndFor
\State $\hat w_i \gets
       (T_0\, m\, V)^{-1}
       \sum_{t,l,v}\bigl(\tilde\ell_{i,v}^{(l,t)}\bigr)^2$
       for each $i$
\Statex \textbf{Stage B: refinement at plug-in allocation.}
\State $\hat w_g \gets \bigl(\prod_{j=1}^{K} \hat w_j\bigr)^{1/K}$
\State $\hat B_i \gets
       \mathrm{clip}\bigl(
       B_{\mathrm{tot}}/K + (V/2)\log_2(\hat w_i / \hat w_g),\,
       0,\, B_{\max}\bigr)$ for each $i$
\If{any $\hat B_i$ is saturated at $B_{\max}$}
  \State redistribute residual budget across the unsaturated subset
  by re-solving Theorem~\ref{thm:alloc} on that subset
\EndIf
\For{$t = T_0 + 1, \dots, T$}
  \For{each node $i$ \textbf{in parallel}}
    \State transmit $\hat B_i$ bits per probe context
  \EndFor
\EndFor
\State aggregator runs the FPLD aggregation rule on all received logits
       and emits the final student $\hat P$
\end{algorithmic}
\end{algorithm}

\section{Numerical Illustrations}\label{sec:exp}

We test Theorems~\ref{thm:fpld-upper}, \ref{thm:matching-lb},
\ref{thm:het-upper}, and~\ref{thm:alloc} on a synthetic n-gram FPLD
simulator. Each node
observes a noisy logit estimate of a fixed common $P^\star$ over $V = 256$
tokens (estimation noise variance $1/n$ per coordinate, $n = 30{,}000$),
and transmits scalar-quantized logits at $B/V$ (or $B_i/V$) bits per
coordinate via a subtractively dithered uniform quantizer. The aggregator
averages quantized logits coordinate-wise and emits the softmax student.
Empirical KL is reported across $30$ seeds per point for
Figure~\ref{fig:fig1} and $100$ for Figure~\ref{fig:fig2}; both use a
single fixed $P^\star$ so per-axis variability isolates the bandwidth effect.
The simulator is a controlled sandbox designed to verify the predicted
rate shapes ($1/K$ shrinkage, $2^{-2B/V}$ decay, water-filling
dominance) on a setting where the dithered-quantization channel is
exact; end-to-end evaluation on real language-model training is out of
scope (\S~\ref{sec:intro}).

\paragraph{Bound bracketing (Figure~\ref{fig:fig1}).} Empirical FPLD
KL is bracketed by Theorem~\ref{thm:fpld-upper}'s upper bound and
Theorem~\ref{thm:matching-lb}'s matching lower bound, both at the
$K^{-1} \cdot 2^{-2B/V}$ rate: the $K$-sweep at $B/V = 4$ shows the
predicted $1/K$ shrinkage, and the $B$-sweep at $K = 4$ shows the
predicted exponential decay in $B/V$ (KL roughly quartering with
each additional bit per coordinate). The bracketing is the
qualitative claim; the specific multiplicative constants $c_3$ and
$c_{\mathrm{lb}}$ are illustrative.

\begin{figure}[t]
\centering
\includegraphics[width=0.78\linewidth]{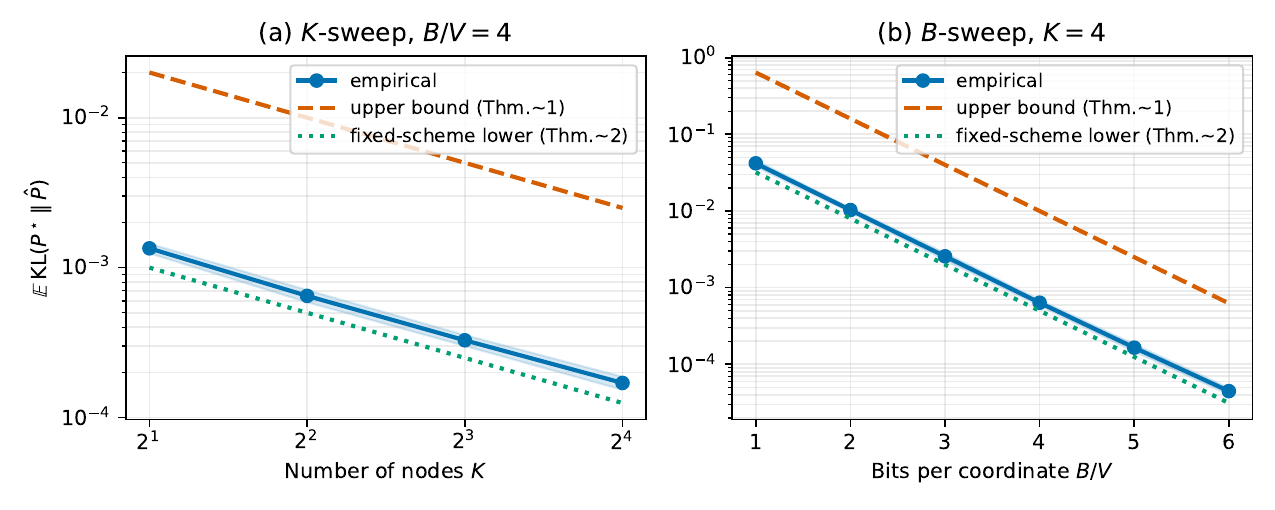}
\caption{Empirical FPLD KL (solid blue) bracketed by
Theorem~\ref{thm:fpld-upper}'s upper bound (dashed orange) and
Theorem~\ref{thm:matching-lb}'s matching lower bound (dotted teal).
Synthetic n-gram simulator, $V = 256$, $n = 30{,}000$, $30$ seeds
per point. The $K$-sweep at $B/V = 4$ shows $1/K$ shrinkage; the
$B$-sweep at $K = 4$ shows exponential decay in $B/V$. Constants
are illustrative.}
\label{fig:fig1}
\end{figure}

\paragraph{Allocation policy (Figure~\ref{fig:fig2}).} With $K = 4$
nodes and heterogeneous per-node clip levels $L = (1, 1, 4, 4)$
(weights $w_i = L_i^2 = (1, 1, 16, 16)$, mirroring the running
example with two metropolitan-WAN hospitals at wider uplink paired
with two satellite clinics at narrower uplink), the optimal
allocation (Theorem~\ref{thm:alloc}) strictly dominates uniform at
every aggregate budget where bandwidth matters: at
$B_{\mathrm{tot}}/V = 8$ the optimal achieves KL $\approx
1.1{\times}10^{-2}$, roughly half of uniform's
$2.3{\times}10^{-2}$, and the gap persists at higher budgets.
Inverse-weighted plateaus at $\approx 8{\times}10^{-2}$ across the
full range (roughly an order of magnitude above the optimal),
validating that the qualitative direction of
Theorem~\ref{thm:alloc} (more bits to higher-variance nodes) is
correct.

\begin{figure}[t]
\centering
\includegraphics[width=0.5\linewidth]{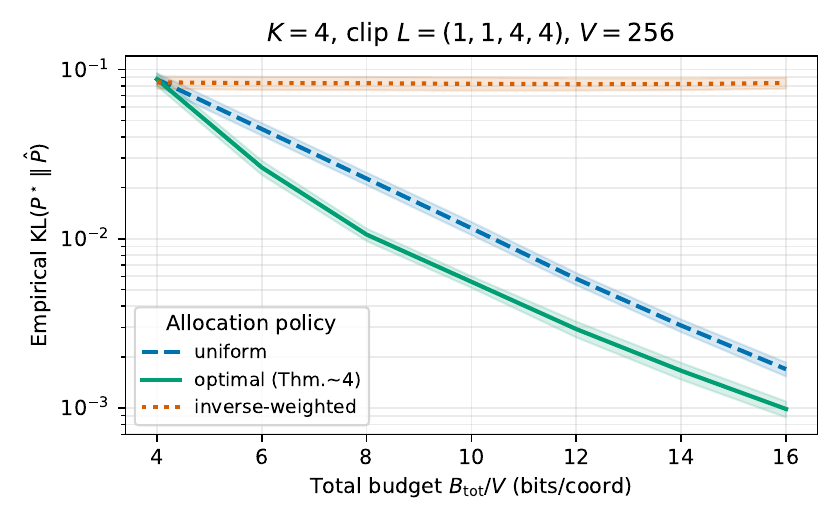}
\caption{Heterogeneous bandwidth allocation: optimal (solid teal),
uniform (dashed blue), and inverse-weighted (dotted orange),
$K = 4$ nodes with per-node clip levels $L = (1, 1, 4, 4)$. Optimal
dominates uniform by roughly $2\times$ at $B_\mathrm{tot}/V = 8$
and the gap persists at higher budgets; inverse-weighted plateaus
near $8{\times}10^{-2}$. $100$ seeds per point.}
\label{fig:fig2}
\end{figure}

\section{Discussion}\label{sec:discuss}

\paragraph{Position vs distributed estimation.}
Theorems~\ref{thm:matching-lb} and~\ref{thm:lb-mr} parallel the
classical lower-bound and achievability program for
communication-constrained distributed
estimation~\citep{zhangduchiwainwright2013, huanghuo2019,
acharyacanonnetyagi2020, gargmanguyen2014}, specialized here to
the per-coordinate scalar-quantization channel on logit vectors
over a shared probe set. The technique differs: we apply the
softmax-Hessian trace bound to the exact cumulant expansion under
independent mean-zero dithering. Our sequential-refinement
achievability improves the bandwidth-axis rate linearly in $T$
within the FPLD-class scheme; an unconditional minimax lower bound
over arbitrary $B$-bit-per-node protocols remains open.

\paragraph{Position vs federated distillation.} FedFD
\citep{fedfd2025} and FedDF \citep{feddf2020} demonstrate federated
distillation in practice but prove no rate; our
heterogeneous-bandwidth result (Theorem~\ref{thm:het-upper}) closes
that gap and pairs it with a closed-form allocation rule.

\clearpage
\bibliographystyle{plainnat}
\bibliography{references}

\clearpage
\appendix
\begin{center}
{\Large\bfseries Supplementary Material}
\end{center}
\addcontentsline{toc}{section}{Supplementary Material}

\section{Assumption block}\label{app:assumptions}

\begin{assumption}[Parametric well-specification]\label{ass:A1}
$P^\star \in \mathcal{F}_\Theta = \{P_\theta : \theta \in \Theta\}$ with
$\Theta \subset \mathbb{R}^d$ compact, $\theta^\star \in \mathrm{int}(\Theta)$
exists with $P_{\theta^\star} = P^\star$, and the Fisher information
$I(\theta^\star) \succ 0$ is positive definite. Standard smoothness holds
(twice continuous differentiability of $\log p_\theta$).
\end{assumption}

\begin{assumption}[Probe coverage]\label{ass:A2}
$Q$ covers $P^\star_X$ with bounded Radon--Nikodym derivative
$\rho = \operatorname{ess\,sup}_x dP^\star_X / dQ < \infty$.
\end{assumption}

\begin{assumption}[Homogeneous data]\label{ass:A3}
$P_i = P^\star$ for all nodes $i$.
\end{assumption}

\begin{assumption}[Local optimization slack]\label{ass:A4}
Every local optimizer terminates within mean-squared distance
$\varepsilon_{\mathrm{opt}}$ of the local MLE:
$\mathbb{E}\|\hat\theta_i^{(t)} - \hat\theta_i^{\mathrm{MLE}}\|^2 \le
\varepsilon_{\mathrm{opt}}$.
\end{assumption}

\begin{assumption}[Uniform scalar quantization]\label{ass:A5}
Each node clips its per-probe logit vector to $[-L_\ell, L_\ell]^V$ and
scalar-quantizes each coordinate to $B/V$ bits using a subtractively dithered
uniform quantizer; per-coordinate distortion satisfies
$\mathbb{E}[(\tilde\ell_v - \ell_v)^2] \le C_q \, 2^{-2B/V}$ with
$C_q = L_\ell^2 / 3$, errors independent across nodes and coordinates.
\end{assumption}

\begin{assumption}[Distillation fit]\label{ass:A6}
The aggregator's distillation step produces $\hat P \in \mathcal{F}_\Theta$
within $\varepsilon_{\mathrm{fit}}$ of the infimum of empirical probe-KL.
\end{assumption}

\section{Proofs}\label{app:lb}

This appendix collects the proofs deferred from the main text.
\S~\ref{ssec:soft-trace} states the softmax-Hessian trace bound used
throughout. \S~\ref{ssec:trace-upper} derives the bandwidth-term form
of Theorem~\ref{thm:fpld-upper} via the trace bound (paralleling the
analogous derivation in the companion
paper~\citep{dubey2026federatedlanguagemodelsbandwidth}). The remaining subsections follow the order of the
corresponding statements in the body:
\S~\ref{ssec:matching-lb-proof} proves the matching lower bound of
Theorem~\ref{thm:matching-lb};
\S~\ref{app:lb-mr} proves the sequential-refinement achievability of
Theorem~\ref{thm:lb-mr};
\S~\ref{app:het-upper} proves the heterogeneous-bandwidth upper
bound of Theorem~\ref{thm:het-upper};
\S~\ref{app:alloc} proves the optimal-allocation closed form of
Theorem~\ref{thm:alloc};
\S~\ref{app:adaptive} proves the adaptive-allocation
Corollary~\ref{cor:adaptive}.

\subsection{Softmax-Hessian trace bound}\label{ssec:soft-trace}

\begin{lemma}[Softmax Hessian trace bound]\label{lem:soft-trace}
For any $p \in \Delta(\mathcal{V})$:
\begin{equation*}
\begin{aligned}
H_\mathrm{soft}(p) &= \mathrm{diag}(p) - pp^\top
  && [\text{categorical softmax Fisher/Hessian}] \\
\mathrm{tr}(H_\mathrm{soft}(p))
  &= \mathrm{tr}(\mathrm{diag}(p)) - \mathrm{tr}(pp^\top)
  && [\text{linearity of trace}] \\
  &= \textstyle\sum_v p_v - \sum_v p_v^2
  && [\text{trace of diagonal and rank-one matrices}] \\
  &= 1 - \|p\|_2^2
  && [\textstyle\sum_v p_v = 1] \\
  &\le 1
  && [\|p\|_2^2 \ge 0]
\end{aligned}
\end{equation*}
A matching lower bound:
\begin{equation*}
\begin{aligned}
\|p\|_2^2 &= \textstyle\sum_v p_v^2
  && [\text{definition of } \ell_2 \text{ norm squared}] \\
  &\le \textstyle\sum_v p_v \cdot \max_j p_j
  && [p_v \le \max_j p_j \text{ for all } v] \\
  &= \max_j p_j
  && [\textstyle\sum_v p_v = 1] \\
1 - \|p\|_2^2 &\ge 1 - \max_j p_j
  && [\text{subtract from } 1]
\end{aligned}
\end{equation*}
So if $\max_v p_v \le 1 - c_p$, then
$\mathrm{tr}(H_\mathrm{soft}(p)) \ge c_p$.
\end{lemma}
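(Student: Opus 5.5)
The statement is Lemma~\ref{lem:soft-trace}, and its proof already appears inline as a chain of annotated equalities. My plan is therefore to confirm each link and to make explicit the two facts the chain uses implicitly.

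\emph{First identity.} I would establish that $H_\mathrm{soft}(p)=\mathrm{diag}(p)-pp^\top$ is the Hessian of the log-partition function $A(\ell)=\log\sum_v e^{\ell_v}$ at any logit vector $\ell$ with $\mathrm{softmax}(\ell)=p$. Differentiating once gives $\partial_v A = p_v$. Differentiating again gives $\partial_u p_v = p_v(\mathbb{1}[u=v]-p_u)$. This same matrix is the covariance of the one-hot vector $e_Y$ with $Y\sim p$, and it equals the categorical Fisher information in logit coordinates. That identification is what justifies calling it the ``softmax Fisher/Hessian.''

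\emph{Trace computation.} The trace is linear. We have $\mathrm{tr}(\mathrm{diag}(p))=\sum_v p_v=1$ and $\mathrm{tr}(pp^\top)=p^\top p=\|p\|_2^2$. Hence $\mathrm{tr}(H_\mathrm{soft}(p))=1-\|p\|_2^2\le 1$, with equality only in the degenerate limit where $\|p\|_2^2=0$. That limit is impossible on the simplex, so the bound is in fact strict.

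\emph{Lower side.} I would use the bound $\sum_v p_v^2\le(\max_j p_j)\sum_v p_v=\max_j p_j$, which is Hölder's inequality with exponents $(1,\infty)$. It gives $\mathrm{tr}(H_\mathrm{soft}(p))\ge 1-\max_j p_j$. Consequently, $\max_v p_v\le 1-c_p$ implies a trace of at least $c_p$.

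I expect no real obstacle. The only point needing care is the remark that the pointwise condition $\max_v p_v\le 1-c_p$ is sufficient but not necessary for the averaged condition (ND) of Theorem~\ref{thm:matching-lb}. Applying the pointwise bound under $\mathbb{E}_{X\sim Q}$ yields (ND), which is how the lemma will be used downstream.
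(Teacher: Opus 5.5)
Your proposal is correct and follows the paper's argument essentially verbatim: linearity of trace with $\sum_v p_v = 1$ gives $\mathrm{tr}(H_\mathrm{soft}(p)) = 1 - \|p\|_2^2 \le 1$, and $\sum_v p_v^2 \le \max_j p_j$ gives the lower bound $1 - \max_j p_j \ge c_p$. Your extra derivation of $\mathrm{diag}(p) - pp^\top$ as the log-partition Hessian is correct but not needed. The paper simply asserts this form, and the Hessian reading literally applies only for $p$ in the interior of the simplex. Your strictness remark is also correct (indeed $\|p\|_2^2 \ge 1/V$ by Cauchy--Schwarz) and likewise unnecessary, since both trace bounds concern the matrix $\mathrm{diag}(p) - pp^\top$ itself.
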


\subsection{Bandwidth-term form of Theorem~\ref{thm:fpld-upper}}\label{ssec:trace-upper}

For the FPLD scheme under Assumption~\ref{ass:A5}, we derive the
trace-sharpened upper bound directly from the exact softmax-KL
formula. Throughout the derivation we assume the small-error
condition
\begin{equation}\label{eq:SE}
\textbf{(SE)}\qquad
2^{-B/V}(\log V)^{3/2}/\sqrt{K} \;\le\; c_0(L_\ell, C),
\end{equation}
where $C$ is the cubic-remainder constant depending only on
$\|H_\mathrm{soft}\|_\mathrm{op} \le 1/2$ and $c_0(L_\ell, C)$ is
chosen so the cumulant-expansion cubic remainder is dominated by the
quadratic term. Use the exact identity:
\[
\resizebox{\linewidth}{!}{$\displaystyle
\begin{aligned}
\mathrm{KL}(\mathrm{softmax}(\ell^\star) \,\|\, \mathrm{softmax}(\ell^\star + \eta))
  &= \log Z(\ell^\star + \eta) - \log Z(\ell^\star) - p^{\star\top}\eta
  && \stepnote{direct calculation from KL definition} \\
  &= \log \mathbb{E}_{Y \sim p^\star}[e^{\eta_Y}] - p^{\star\top}\eta
  && \stepnote{$Z(\ell^\star + \eta)/Z(\ell^\star) = \mathbb{E}_Y[e^{\eta_Y}]$}
\end{aligned}
$}
\]
Set $\eta = \bar\varepsilon = (1/K)\sum_i \varepsilon_i$ and take
expectation over $\bar\varepsilon$:
\begin{equation*}
\begin{aligned}
\mathbb{E}_{\bar\varepsilon}[\mathrm{KL}]
  &= \mathbb{E}_{\bar\varepsilon}\!\bigl[\log \mathbb{E}_Y[e^{\bar\varepsilon_Y}]\bigr]
    - p^{\star\top} \mathbb{E}_{\bar\varepsilon}[\bar\varepsilon]
  && [\text{linearity}] \\
  &= \mathbb{E}_{\bar\varepsilon}\!\bigl[\log \mathbb{E}_Y[e^{\bar\varepsilon_Y}]\bigr]
  && [\mathbb{E}[\bar\varepsilon] = 0 \text{ by Assumption~\ref{ass:A5}}]
\end{aligned}
\end{equation*}
Cumulant-expand the log-MGF
$M(\eta) := \log \mathbb{E}_Y[e^{\eta_Y}]$:
\begin{equation*}
\begin{aligned}
M(\eta)
  &= \mathbb{E}_Y[\eta_Y] + \tfrac{1}{2}\mathrm{Var}_Y(\eta_Y) + \kappa_3(\eta)
  && \stepnote{cumulant expansion: mean $+$ half-variance $+$ cubic remainder} \\
  &= p^{\star\top}\eta + \tfrac{1}{2}\, \eta^\top H_\mathrm{soft}(p^\star)\, \eta + \kappa_3(\eta)
  && [\mathrm{Var}_Y(\eta_Y) = \eta^\top H_\mathrm{soft}(p^\star)\, \eta]
\end{aligned}
\end{equation*}
where $|\kappa_3(\eta)| \le C\|\eta\|_\infty^3$ for an absolute
constant $C$ depending only on
$\|H_\mathrm{soft}\|_\mathrm{op} \le 1/2$.

Take expectation:
\begin{equation*}
\begin{aligned}
\mathbb{E}_{\bar\varepsilon}[M(\bar\varepsilon)]
  &= p^{\star\top}\mathbb{E}[\bar\varepsilon]
    + \tfrac{1}{2}\mathbb{E}\!\bigl[\bar\varepsilon^\top H_\mathrm{soft}(p^\star)\bar\varepsilon\bigr]
    + \mathbb{E}[\kappa_3(\bar\varepsilon)]
  && [\text{linearity}] \\
  &= 0 + \tfrac{1}{2}\, \mathrm{tr}\!\bigl(H_\mathrm{soft}(p^\star) \cdot \mathrm{Cov}(\bar\varepsilon)\bigr)
    + \mathbb{E}[\kappa_3(\bar\varepsilon)]
  && [\mathbb{E}[\eta^\top A \eta] = \mathrm{tr}(A \cdot \mathrm{Cov}(\eta)) \text{ for } \mathbb{E}[\eta] = 0]
\end{aligned}
\end{equation*}
Bound the dither covariance:
\begin{equation*}
\begin{aligned}
\mathrm{Cov}(\bar\varepsilon)_{vv}
  &= \mathrm{Var}\!\Bigl(\tfrac{1}{K}\textstyle\sum_{i=1}^K \varepsilon_{i,v}\Bigr)
  && [\text{average across $K$ nodes}] \\
  &= \tfrac{1}{K^2}\, \textstyle\sum_{i=1}^K \mathrm{Var}(\varepsilon_{i,v})
  && [\text{independence across nodes; no cross terms}] \\
  &\le \tfrac{1}{K^2}\cdot K \cdot C_q \cdot 2^{-2B/V}
  && [\text{Assumption~\ref{ass:A5} per-coordinate dither variance bound}] \\
  &= \tfrac{C_q}{K}\cdot 2^{-2B/V}
  && [\text{simplify}] \\
\mathrm{Cov}(\bar\varepsilon)_{vv'} &= 0 \quad \text{for } v \ne v'
  && [\text{independence across coordinates by Assumption~\ref{ass:A5}}]
\end{aligned}
\end{equation*}
So $\mathrm{Cov}(\bar\varepsilon) \preceq \sigma^2 I_V$ with
$\sigma^2 := C_q \cdot 2^{-2B/V}/K$.

Apply the trace bound:
\begin{equation*}
\begin{aligned}
\mathrm{tr}\!\bigl(H_\mathrm{soft}(p^\star) \cdot \mathrm{Cov}(\bar\varepsilon)\bigr)
  &\le \sigma^2 \cdot \mathrm{tr}(H_\mathrm{soft}(p^\star))
  && \stepnote{$\mathrm{tr}(AB) \le \mathrm{tr}(A)\cdot \|B\|_\mathrm{op}$ for $A \succeq 0$; $\|\mathrm{Cov}(\bar\varepsilon)\|_\mathrm{op} \le \sigma^2$} \\
  &\le \sigma^2
  && [\text{Lemma~\ref{lem:soft-trace}: } \mathrm{tr}(H_\mathrm{soft}) \le 1]
\end{aligned}
\end{equation*}
Bound the cubic remainder:
\begin{equation*}
\begin{aligned}
\mathbb{E}\|\bar\varepsilon\|_\infty^3
  &\le C' \sigma^3 (\log V)^{3/2}
  && [\text{maximum of $V$ sub-Gaussians; standard tail bound}] \\
|\mathbb{E}[\kappa_3(\bar\varepsilon)]|
  &\le C \cdot C' \sigma^3 (\log V)^{3/2}
  && [\text{cubic remainder absolute bound}] \\
  &= C'' \cdot \tfrac{(\log V)^{3/2}}{K^{3/2}}\cdot 2^{-3B/V}
  && [\sigma = \sqrt{C_q/K}\cdot 2^{-B/V}]
\end{aligned}
\end{equation*}
Combine:
\begin{equation*}
\begin{aligned}
\mathbb{E}_{\bar\varepsilon}[\mathrm{KL}]
  &= \tfrac{1}{2}\, \mathrm{tr}\!\bigl(H_\mathrm{soft}(p^\star) \cdot \mathrm{Cov}(\bar\varepsilon)\bigr)
    + \mathbb{E}[\kappa_3(\bar\varepsilon)]
  && [\text{from the expectation chain above}] \\
  &\le \tfrac{1}{2}\sigma^2 + C''\, \tfrac{(\log V)^{3/2}}{K^{3/2}}\cdot 2^{-3B/V}
  && [\text{trace bound + cubic remainder bound}] \\
  &= \tfrac{C_q}{2K}\cdot 2^{-2B/V}
    + O\!\Bigl(\tfrac{(\log V)^{3/2}}{K^{3/2}}\cdot 2^{-3B/V}\Bigr)
  && [\text{substitute } \sigma^2 = C_q \cdot 2^{-2B/V}/K]
\end{aligned}
\end{equation*}
Setting $c_3 = C_q/2$ recovers the theorem statement under (SE). The
$V$-factor cancellation arises directly from
$\mathrm{tr}(H_\mathrm{soft}) \le 1$ in the second step of the trace
bound; in contrast, the conservative Euclidean-logit bound uses
$\mathbb{E}\|\bar\varepsilon\|_2^2 = V\sigma^2$ and the operator norm
$\|H_\mathrm{soft}\|_\mathrm{op} \le 1/2$, which produces the $V/K$
form. \qed

\subsection{Proof of Theorem~\ref{thm:matching-lb}}\label{ssec:matching-lb-proof}

We prove Theorem~\ref{thm:matching-lb}: under
Assumptions~\ref{ass:A1}--\ref{ass:A6} together with (ND) and
\textbf{(SE')} $2^{-B/V}(\log V)^{3/2}/\sqrt{K} \le c_0(c_p, L_\ell, C)$
where $c_0$ is chosen so $C \cdot \sigma(\log V)^{3/2} \le c_p/4$,
the bandwidth contribution of the dithered FPLD construction
satisfies
$\mathbb{E}[\mathrm{KL}(P^\star \,\|\, \hat P_{\mathrm{FPLD}})]_{\mathrm{bandwidth}}
\ge c_p L_\ell^2/(12 K) \cdot 2^{-2B/V}$.

\begin{proof}
Compute the dither variance directly under Assumption~\ref{ass:A5}.
The uniform quantizer on $[-L_\ell, L_\ell]$ at $B/V$ bits has step
size
\begin{equation*}
\begin{aligned}
\Delta &= \frac{2 L_\ell}{2^{B/V}} = 2 L_\ell \cdot 2^{-B/V}
  && [\text{uniform partition of $[-L_\ell, L_\ell]$ into $2^{B/V}$ cells}] \\
\varepsilon_{i,v} &\sim \mathrm{Unif}[-\Delta/2, +\Delta/2]
  && [\text{subtractive dither, no overload by Assumption~\ref{ass:A5}}] \\
\mathbb{E}[\varepsilon_{i,v} \mid X] &= 0
  && [\text{subtractive dither is unbiased}] \\
\mathrm{Var}(\varepsilon_{i,v} \mid X)
  &= \frac{\Delta^2}{12}
  && [\text{variance of uniform on an interval of length $\Delta$}] \\
  &= \frac{(2 L_\ell)^2 \cdot 2^{-2B/V}}{12}
  && [\text{substitute } \Delta = 2 L_\ell \cdot 2^{-B/V}] \\
  &= \frac{L_\ell^2}{3}\cdot 2^{-2B/V}
  && [\text{simplify}]
\end{aligned}
\end{equation*}
The aggregator forms
$\bar\varepsilon_v = (1/K)\sum_{i=1}^K \varepsilon_{i,v}$.
Conditional on $X$:
\begin{equation*}
\begin{aligned}
\mathbb{E}[\bar\varepsilon_v \mid X]
  &= 0
  && [\text{linearity; each $\varepsilon_{i,v}$ is conditionally mean-zero}] \\
\mathrm{Var}(\bar\varepsilon_v \mid X)
  &= \frac{1}{K^2}\sum_{i=1}^K \mathrm{Var}(\varepsilon_{i,v} \mid X)
  && [\text{independence across nodes; no cross terms}] \\
  &= \frac{1}{K^2}\cdot K \cdot \frac{L_\ell^2}{3}\cdot 2^{-2B/V}
  && [\text{substitute per-node variance}] \\
  &= \frac{L_\ell^2}{3 K}\cdot 2^{-2B/V}
  && [\text{simplify}] \\
\mathrm{Cov}(\bar\varepsilon \mid X)
  &= \sigma^2 I_V,\quad \sigma^2 := \frac{L_\ell^2}{3 K}\cdot 2^{-2B/V}
  && [\text{independence across coordinates by Assumption~\ref{ass:A5}}]
\end{aligned}
\end{equation*}
Apply the exact softmax-KL cumulant expansion conditional on $X$:
\[
\resizebox{\linewidth}{!}{$\displaystyle
\begin{aligned}
\mathbb{E}_{\bar\varepsilon}\!\bigl[\mathrm{KL}(\mathrm{softmax}(\ell^\star(X)) \,\|\, \mathrm{softmax}(\ell^\star(X) + \bar\varepsilon)) \mid X\bigr]
  &= \tfrac{1}{2}\,\mathrm{tr}\!\bigl(H_\mathrm{soft}(p^\star(\cdot \mid X)) \cdot \mathrm{Cov}(\bar\varepsilon \mid X)\bigr) + R_3(X)
  && \stepnote{cumulant expansion; $R_3$ is the cubic remainder} \\
  &= \tfrac{\sigma^2}{2}\,\mathrm{tr}\!\bigl(H_\mathrm{soft}(p^\star(\cdot \mid X))\bigr) + R_3(X)
  && \stepnote{$\mathrm{Cov}(\bar\varepsilon \mid X) = \sigma^2 I_V$}
\end{aligned}
$}
\]
Take expectation over $X \sim Q$:
\[
\resizebox{\linewidth}{!}{$\displaystyle
\begin{aligned}
\mathbb{E}_{X, \bar\varepsilon}[\mathrm{KL}]
  &= \frac{\sigma^2}{2}\, \mathbb{E}_X\!\bigl[\mathrm{tr}(H_\mathrm{soft}(p^\star(\cdot \mid X)))\bigr] + \mathbb{E}_X[R_3(X)]
  && \stepnote{linearity} \\
  &= \frac{\sigma^2}{2}\, \mathbb{E}_X\!\bigl[1 - \|p^\star(\cdot \mid X)\|_2^2\bigr] + \mathbb{E}_X[R_3(X)]
  && \stepnote{$\mathrm{tr}(H_\mathrm{soft}(p)) = 1 - \|p\|_2^2$ by Lemma~\ref{lem:soft-trace}} \\
  &\ge \frac{\sigma^2 c_p}{2} - |\mathbb{E}_X[R_3(X)]|
  && \stepnote{apply (ND); bound the remainder in absolute value}
\end{aligned}
$}
\]
Apply (SE') to dominate the cubic remainder. The remainder satisfies
$|\mathbb{E}_X[R_3(X)]| \le C \sigma^3 (\log V)^{3/2}$; under (SE'),
$c_0(c_p, L_\ell, C)$ is chosen so
$C \cdot \sigma \cdot (\log V)^{3/2} \le c_p/4$, equivalently
$|\mathbb{E}_X[R_3(X)]| \le (c_p/4) \cdot \sigma^2$. Hence:
\begin{equation*}
\begin{aligned}
\mathbb{E}_{X, \bar\varepsilon}[\mathrm{KL}]
  &\ge \frac{\sigma^2 c_p}{2} - \frac{c_p \sigma^2}{4}
  && [\text{cubic remainder bounded by (SE')}] \\
  &= \frac{c_p \sigma^2}{4}
  && [\text{combine}] \\
  &= \frac{c_p}{4}\cdot \frac{L_\ell^2}{3 K}\cdot 2^{-2B/V}
  && [\text{substitute } \sigma^2 = L_\ell^2/(3K) \cdot 2^{-2B/V}] \\
  &= \frac{c_p L_\ell^2}{12 K}\cdot 2^{-2B/V}
  && [\text{simplify to closed form}]
\end{aligned}
\end{equation*}
This is the displayed lower bound with
$c_{\mathrm{lb}} := c_p L_\ell^2 / 12$.
\end{proof}

\paragraph{Remark on scope.}
Theorem~\ref{thm:matching-lb} is a lower bound for the
subtractively dithered FPLD construction
(Assumption~\ref{ass:A5}); the unconditional minimax bandwidth-axis
lower bound over the larger class $\mathcal{P}_B$ of arbitrary
$B$-bit-per-node protocols (alternative quantizers such as biased,
vector, learned-codebook, or adaptive ones lie outside
Assumption~\ref{ass:A5}) remains open (see~\S~\ref{sec:discuss}).

\subsection{Proof of Theorem~\ref{thm:lb-mr}}\label{app:lb-mr}

Theorem~\ref{thm:lb-mr} is an achievability statement: sequential
refinement with nested/scaled residual quantizers and $B$ bits per
round per node achieves $O(K^{-1} \cdot 2^{-2TB/V})$. The proof
iterates the single-round trace-sharpened upper bound of
Theorem~\ref{thm:fpld-upper} over the residual at each round.

\paragraph{Setup.}
In round $t \in \{1, \dots, T\}$, each node $i$ holds the residual
$r^{(t-1)}(X) := \ell^\star(X) - \hat\ell^{(t-1)}(X)$, where
$\hat\ell^{(0)}(X) = 0$ (or any common initialization) and
$\hat\ell^{(t-1)}(X) = (1/K)\sum_i \tilde r_i^{(t-1)}(X)$ is the
aggregator's broadcast after round $t-1$. The nested/scaled quantizer
of round $t$ uses step size
\begin{equation*}
\begin{aligned}
\Delta_t &= 2 L_\ell^{(t)} \cdot 2^{-B/V}
  && [\text{rescaled to round-$t$ residual range}] \\
L_\ell^{(t)} &\asymp L_\ell \cdot 2^{-(t-1)B/V}
  && [\text{residual range after $t-1$ rounds of refinement}]
\end{aligned}
\end{equation*}

The post-aggregation residual variance per coordinate,
$\sigma_t^2 := \mathrm{Var}(\hat\ell^{(t)}_v - \ell^\star_v \mid X)$,
satisfies a geometric recursion:
\begin{equation*}
\begin{aligned}
\sigma_t^2 &\asymp \frac{\Delta_t^2}{12 K}
  && \stepnote{trace-sharpened upper bound applied to round-$t$ residual, $K$-fold averaged} \\
  &= \frac{(L_\ell^{(t)})^2}{3 K}\cdot 2^{-2B/V}
  && [\text{substitute } \Delta_t = 2 L_\ell^{(t)} \cdot 2^{-B/V}] \\
  &\asymp \frac{L_\ell^2 \cdot 2^{-2(t-1)B/V}}{3 K}\cdot 2^{-2B/V}
  && [\text{substitute } L_\ell^{(t)} \asymp L_\ell \cdot 2^{-(t-1)B/V}] \\
  &= \frac{L_\ell^2}{3 K}\cdot 2^{-2tB/V}
  && [\text{combine exponents}]
\end{aligned}
\end{equation*}
After $T$ rounds,
$\sigma_T^2 \asymp L_\ell^2/(3K) \cdot 2^{-2TB/V}$.
Applying the trace bound from Lemma~\ref{lem:soft-trace} to the final
aggregated residual:
\[
\resizebox{\linewidth}{!}{$\displaystyle
\begin{aligned}
\mathbb{E}\bigl[\mathrm{KL}(P^\star \,\|\, \hat P_{\mathrm{seq\text{-}ref}})\bigr]_{\mathrm{bandwidth}}
  &\le \tfrac{1}{2}\,\sigma_T^2 \cdot \mathrm{tr}(H_\mathrm{soft}) + \text{cubic remainder}
  && \stepnote{Trace-sharpened cumulant chain (\S~\ref{ssec:trace-upper}) applied at round $T$} \\
  &\le \tfrac{1}{2}\,\sigma_T^2 + O(\sigma_T^3 (\log V)^{3/2})
  && \stepnote{$\mathrm{tr}(H_\mathrm{soft}) \le 1$} \\
  &= \frac{C_3'}{K}\cdot 2^{-2TB/V}
    + O\!\Bigl(\tfrac{(\log V)^{3/2}}{K^{3/2}}\cdot 2^{-3TB/V}\Bigr)
  && \stepnote{substitute $\sigma_T^2 = L_\ell^2/(3K) \cdot 2^{-2TB/V}$; set $C_3' := L_\ell^2/6$}
\end{aligned}
$}
\]
Here $C_3'$ is a possibly different constant from
Theorem~\ref{thm:fpld-upper}'s $c_3$, depending only on the same
scalar-quantizer / dither convention (the two coincide when
$C_q = L_\ell^2/3$, the standard subtractive-uniform dither variance
under Assumption~\ref{ass:A5}, in which case $C_3' = c_3$). The
vanilla FPLD comparison is the special case where each round
re-quantizes the full $\ell^\star$ at full range $L_\ell$ (rather
than the shrinking residual), so $\sigma_t^2$ does not decrease
across rounds and the bandwidth term stalls at
$L_\ell^2/(6K) \cdot 2^{-2B/V}$ for every $T$. \qed

\paragraph{Note on protocol class.}
The above proof relies on each round being an FPLD-class protocol on
the residual, with the quantizer step rescaled to the residual range.
A matching unconditional multi-round minimax lower bound over
arbitrary $B$-bit interactive protocols is not claimed. Within the
same scalar-refinement FPLD multi-round class under (ND) and (SE'),
Theorem~\ref{thm:matching-lb}'s dither-variance chain applied at
round $T$ gives a fixed-scheme lower bound
$\Omega(K^{-1} \cdot 2^{-2TB/V})$.

\subsection{Proof of Theorem~\ref{thm:het-upper}}\label{app:het-upper}

The aggregator's averaged quantized logits are
$\bar\ell(x) = \tfrac{1}{K}\sum_i \tilde\ell_i(x)$ where
$\tilde\ell_i = \ell_i + \xi_i$, with $\ell_i$ the empirical logit
produced by node $i$'s local optimizer and $\xi_i$ the dithered
quantization noise (zero-mean and independent across nodes/coordinates,
Assumption~\ref{ass:A5p}). The student $\hat P$ fits the soft-max of
$\bar\ell$ within $\varepsilon_{\mathrm{fit}}$ (Assumption~\ref{ass:A6}),
and local optimizers terminate within $\varepsilon_{\mathrm{opt}}$ of
the local MLE (Assumption~\ref{ass:A4}).

The proof follows the same structure as the homogeneous-bandwidth
case (Theorem~\ref{thm:fpld-upper}; full derivation in the
companion
paper~\citep{dubey2026federatedlanguagemodelsbandwidth}), with the
sole modification being that the per-node budgets $B_i$ replace the
uniform $B$ in the bandwidth term. The data,
probe-generalization, and distillation-fit terms are inherited
unchanged from that derivation. We record below the only step that
changes: the cumulant-trace bound on the bandwidth contribution.

\paragraph{Cumulant-trace decomposition.}
Write $\bar\ell - \ell^\star
= (\bar\ell^{\mathrm{emp}} - \ell^\star) + \bar\xi$, with
$\bar\ell^{\mathrm{emp}} = \tfrac{1}{K}\sum_i \ell_i$ the average of
node logits before quantization and
$\bar\xi = \tfrac{1}{K}\sum_i \xi_i$ the averaged dither. By
Assumption~\ref{ass:A5p}, $\bar\xi$ is independent of
$\bar\ell^{\mathrm{emp}} - \ell^\star$ and has zero mean. Expanding
the softmax-KL via the exact cumulant identity of
\S~\ref{ssec:trace-upper} and taking expectations,
\begin{equation*}
\mathbb{E}\bigl[\mathrm{KL}(P^\star \,\|\, \hat P)\bigr]
\;=\; \tfrac{1}{2}\,\mathrm{tr}\!\bigl(H_\mathrm{soft}(p^\star)
       \cdot \mathrm{Cov}(\bar\ell - \ell^\star)\bigr)
\;+\; \varepsilon_{\mathrm{opt}} + \varepsilon_{\mathrm{fit}}
\;+\; O\!\Bigl(\tfrac{(\log V)^{3/2}}{K^{3/2}}\Bigr).
\end{equation*}
By independence and zero-mean of $\bar\xi$, cross-cumulants vanish
at second order and the covariance splits additively:
$\mathrm{Cov}(\bar\ell - \ell^\star)
= \mathrm{Cov}(\bar\ell^{\mathrm{emp}} - \ell^\star)
+ \mathrm{Cov}(\bar\xi)$. The trace pulls through, separating the
KL bound into a data-fluctuation contribution and a bandwidth
contribution.

\paragraph{Term (I): data and probe contributions.}
$\tfrac{1}{2}\,\mathrm{tr}\!\bigl(H_\mathrm{soft}(p^\star) \cdot
\mathrm{Cov}(\bar\ell^{\mathrm{emp}} - \ell^\star)\bigr)$ is the
same data-fluctuation contribution that appears in the
homogeneous case. It is bounded by $c_1 d/(K n)$ (pooled
parametric-MLE convergence on $K n$ effective samples plus the
local quadratic expansion of KL at $\theta^\star$) and
$c_2\rho\sqrt{V\log(V/\delta)/m}$ (Rademacher uniform convergence
of the softmax-linear class over the $m$ probe contexts), exactly
as in the homogeneous case; the per-node $B_i$ do not enter this
term.

\paragraph{Term (II): bandwidth.}
Under Assumption~\ref{ass:A5p}'s coordinate-wise independence,
$\mathrm{Cov}(\bar\xi)$ is diagonal with $v$-th entry
$(1/K^2)\sum_{i=1}^K \mathrm{Var}(\xi_{i,v}) \le
(1/K^2)\sum_i C_q\,2^{-2B_i/V}$. Hence
$\mathrm{Cov}(\bar\xi) \preceq \sigma^2 I_V$ with
\[
\sigma^2 \;:=\; \frac{1}{K^2}\sum_{i=1}^{K} C_q\,2^{-2B_i/V}.
\]
Applying Lemma~\ref{lem:soft-trace}'s trace bound
$\mathrm{tr}(H_\mathrm{soft}) \le 1$ together with the standard
inequality $\mathrm{tr}(AB) \le \|B\|_\mathrm{op}\,\mathrm{tr}(A)$
for $A \succeq 0$,
\[
\tfrac{1}{2}\,\mathrm{tr}\!\bigl(H_\mathrm{soft}(p^\star) \cdot \mathrm{Cov}(\bar\xi)\bigr)
\;\le\; \frac{\sigma^2}{2}
\;=\; \frac{C_q}{2K^2}\sum_{i=1}^{K} 2^{-2B_i/V}.
\]
Setting $c_3 = C_q/2$ recovers the bandwidth term
$(c_3/K^2)\sum_i 2^{-2B_i/V}$ stated in
Theorem~\ref{thm:het-upper}. The cubic remainder is dominated under
(SE) by the same argument as in \S~\ref{ssec:trace-upper}.

Combining Term (I), Term (II), and the slack terms yields the
bound stated in Theorem~\ref{thm:het-upper}. \qed

\paragraph{Remark (heterogeneous clip levels).}
The argument extends directly to per-node clip levels $L_{\ell,i}$
(replacing the uniform $L_\ell$ in
Assumption~\ref{ass:A5p}). With per-node dither variance bound
$C_{q,i} = L_{\ell,i}^2/3$, the bandwidth term takes the form
$(1/(2K^2))\sum_i C_{q,i}\cdot 2^{-2B_i/V}
= (1/(6K^2))\sum_i L_{\ell,i}^2\cdot 2^{-2B_i/V}$. This fits
Theorem~\ref{thm:alloc}'s objective
$(1/K^2)\sum_i w_i\cdot 2^{-2B_i/V}$ with $w_i = L_{\ell,i}^2$ (an
overall constant drops out of the water-filling formula, since it
depends on $w_i$ only through the ratio $w_i/\bar w_g$). This is
the form tested in Figure~\ref{fig:fig2} with per-node clips
$L_i = (1, 1, 4, 4)$.

\subsection{Proof of Theorem~\ref{thm:alloc}}\label{app:alloc}

The objective $F$ is convex in each $B_i$: it is the sum of decaying
exponentials in a positive linear function of $B_i$, weighted by $w_i > 0$.
The constraint set is a simplex slice $\{B \in \mathbb{R}_{\ge 0}^K :
\sum_i B_i = B_{\mathrm{tot}}\}$, which is convex. Form the Lagrangian
\[
\mathcal{L}(B, \lambda)
\;=\;
\frac{1}{K^2}\sum_{i=1}^{K} w_i 2^{-2B_i/V}
\;+\;
\lambda\!\left(\sum_{i=1}^{K} B_i - B_{\mathrm{tot}}\right).
\]
Stationarity, $\partial \mathcal{L}/\partial B_i = 0$, gives
\[
-\frac{2 \ln 2}{V K^2}\, w_i \cdot 2^{-2B_i/V}
\;+\; \lambda \;=\; 0
\quad\Longleftrightarrow\quad
2^{-2B_i/V} \;=\; \frac{\lambda V K^2}{2 w_i \ln 2}.
\]
Taking $\log_2$,
\[
B_i \;=\; -\frac{V}{2}\log_2\!\left(\frac{\lambda V K^2}{2 w_i \ln 2}\right)
\;=\; \frac{V}{2}\log_2(w_i) \;+\; \mathrm{const}(\lambda).
\]
Imposing $\sum_i B_i = B_{\mathrm{tot}}$ pins down the constant:
\[
B_i^\star \;=\; \frac{B_{\mathrm{tot}}}{K}
\;+\; \frac{V}{2}\log_2(w_i)
\;-\; \frac{V}{2K}\sum_{j=1}^{K} \log_2(w_j)
\;=\; \frac{B_{\mathrm{tot}}}{K} \;+\; \frac{V}{2}\log_2\!\left(\frac{w_i}{\bar w_g}\right),
\]
where $\bar w_g = \exp\!\bigl(\tfrac{1}{K}\sum_j \log w_j\bigr) = (\prod_j w_j)^{1/K}$
is the geometric mean. When all $w_i$ are equal, $\log_2(w_i/\bar w_g) = 0$
and the optimal allocation is uniform.

The clipped variant ($B_i \in [0, B_{\max}]$) is standard water-filling:
solve unconstrained, identify saturated coordinates, fix them at boundary,
redistribute residual budget among unsaturated coordinates by re-solving the
reduced problem. \qed

\subsection{Proof of Corollary~\ref{cor:adaptive}}\label{app:adaptive}

The proof is a deterministic relative-error transfer lemma followed
by an empirical-second-moment concentration bound; their composition
gives the stated rate.

\paragraph{Relative-error transfer.}

\begin{lemma}[Relative-error transfer]\label{lem:transfer}
Let $w_1, \dots, w_K > 0$ and $\hat w_1, \dots, \hat w_K > 0$ satisfy
$|\hat w_i - w_i| \le \eta\, w_i$ for all $i$, with $\eta \le 1/2$. Let
$B^\star, \hat B^\star$ be the unclipped water-filling allocations of
Theorem~\ref{thm:alloc} computed from $w$ and $\hat w$ respectively. Then
\[
F(\hat B^\star) \;\le\; (1 + 2\eta + 4\eta^2)\, F(B^\star).
\]
\end{lemma}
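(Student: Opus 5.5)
The plan is to evaluate $F$ exactly at both allocations, which turns the comparison into a statement about the ratios $r_i := \hat w_i / w_i$ alone. Substituting the closed form of Theorem~\ref{thm:alloc} gives
\[
2^{-2B_i^\star/V} = 2^{-2B_{\mathrm{tot}}/(KV)}\,\bar w_g / w_i ,
\qquad
F(B^\star) = \frac{1}{K^2}\, 2^{-2B_{\mathrm{tot}}/(KV)}\, K\,\bar w_g .
\]
Every node therefore contributes the same amount at the true optimum. Applying the same substitution to the plug-in allocation $\hat B_i^\star$, built from $\hat w$, gives
\[
2^{-2\hat B_i^\star/V} = 2^{-2B_{\mathrm{tot}}/(KV)}\,\hat w_g/\hat w_i ,
\qquad
F(\hat B^\star) = \frac{1}{K^2}\, 2^{-2B_{\mathrm{tot}}/(KV)}\,\hat w_g \sum_i \frac{w_i}{\hat w_i}.
\]
The common prefactor $K^{-2}2^{-2B_{\mathrm{tot}}/(KV)}$ cancels in the quotient. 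This yields the identity quoted in the proof sketch of Corollary~\ref{cor:adaptive}:
\[
\frac{F(\hat B^\star)}{F(B^\star)} = \Bigl(\prod_j r_j\Bigr)^{1/K}\cdot \frac{1}{K}\sum_i \frac{1}{r_i}.
\]
Both allocations sum to $B_{\mathrm{tot}}$ by construction, so only the objective values need comparing; the lemma concerns the unclipped formulas, so nonnegativity plays no role.

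Next I bound the two factors separately using $r_i \in [1-\eta, 1+\eta]$. A geometric mean never exceeds the largest entry, so the first factor is at most $1+\eta$. An arithmetic mean never exceeds the largest entry either, so the second factor is at most $1/(1-\eta)$. Multiplying gives
\[
\frac{F(\hat B^\star)}{F(B^\star)} \le \frac{1+\eta}{1-\eta} = 1 + \frac{2\eta}{1-\eta} = 1 + 2\eta + \frac{2\eta^2}{1-\eta}.
\]
The hypothesis $\eta \le 1/2$ gives $1/(1-\eta)\le 2$, so the last term is at most $4\eta^2$. This is exactly the claimed constant $1+2\eta+4\eta^2$.

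The step needing the most care is the ratio computation, specifically checking that the geometric-mean normalizers enter with the orientation I wrote. The true geometric mean $\bar w_g$ must appear in $F(B^\star)$ and the estimated one $\hat w_g$ in $F(\hat B^\star)$, so that $\hat w_g/\bar w_g = (\prod_j r_j)^{1/K}$ rather than its inverse.

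One alternative is worth noting. Writing $r_i = e^{s_i}$, the ratio equals $\frac1K\sum_i e^{-(s_i-\bar s)}$, where $\bar s$ is the average of the $s_i$; the first-order term then cancels, so the true loss is $1+O(\eta^2)$. This sharper route is unnecessary for the stated bound, and the crude max/min argument is cleaner, so I would use the latter.
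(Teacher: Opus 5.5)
Your proof is correct and follows the paper's argument: you use the same exact identity $F(\hat B^\star)/F(B^\star) = (\hat w_g/\bar w_g)\cdot\frac{1}{K}\sum_i w_i/\hat w_i$, then bound the geometric-mean factor by $1+\eta$ and each $w_i/\hat w_i$ by $1/(1-\eta)$. The only difference is cosmetic: you expand $(1+\eta)/(1-\eta) = 1+2\eta+2\eta^2/(1-\eta)$ directly, whereas the paper first uses $1/(1-\eta)\le 1+\eta+2\eta^2$, multiplies out, and absorbs $2\eta^3\le\eta^2$.
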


\begin{proof}
Substituting $\hat B_i^\star = B_{\mathrm{tot}}/K + (V/2)\log_2(\hat w_i/\hat w_g)$
into the per-node objective term gives
\[
w_i \cdot 2^{-2 \hat B_i^\star/V}
\;=\; w_i \cdot 2^{-2 B_{\mathrm{tot}}/(KV)} \cdot \frac{\hat w_g}{\hat w_i}.
\]
Comparing with $w_i \cdot 2^{-2 B_i^\star/V} = \bar w_g \cdot
2^{-2 B_{\mathrm{tot}}/(KV)}$ from the proof of Theorem~\ref{thm:alloc}
in \S~\ref{app:alloc},
\[
\frac{F(\hat B^\star)}{F(B^\star)}
\;=\; \frac{\sum_i w_i\, (\hat w_g/\hat w_i)}{K\, \bar w_g}
\;=\; \frac{1}{K}\sum_{i=1}^{K}
       \frac{w_i}{\hat w_i}\cdot\frac{\hat w_g}{\bar w_g}.
\]
Under $|\hat w_i - w_i| \le \eta w_i$ with $\eta \le 1/2$ we have
$\hat w_i \ge (1-\eta) w_i \ge \tfrac{1}{2} w_i > 0$, and a Taylor
expansion of $1/(1-\eta)$ gives
$w_i / \hat w_i \le 1/(1-\eta) \le 1 + \eta + 2\eta^2$. The geometric
mean factor is bounded by AM--GM:
$\hat w_g/\bar w_g = \prod_i (\hat w_i / w_i)^{1/K} \le 1 + \eta$.
Multiplying,
\[
\frac{F(\hat B^\star)}{F(B^\star)}
\;\le\; (1 + \eta + 2\eta^2)(1 + \eta)
\;=\; 1 + 2\eta + 3\eta^2 + 2\eta^3
\;\le\; 1 + 2\eta + 4\eta^2,
\]
the last inequality using $2\eta^3 \le \eta^2$ for $\eta \le 1/2$.
\end{proof}

\paragraph{Concentration of the empirical-second-moment estimator.}

\begin{lemma}[Concentration of the empirical-second-moment estimator]
\label{lem:concentration}
Under Assumption~\ref{ass:A5p}, let $b_0 := B_{\mathrm{tot}}/(KV)$
denote the warm-up per-coordinate bit budget and define the
post-quantization range
\[
R_\ell \;:=\; L_\ell \cdot (1 + 2^{-b_0})
\;=\; L_\ell \cdot \bigl(1 + 2^{-B_{\mathrm{tot}}/(KV)}\bigr).
\]
With probability at least $1 - \delta$,
\[
\bigl|\hat w_i - w_i\bigr|
\;\le\; R_\ell^2 \sqrt{\frac{\log(2K/\delta)}{2\, m\, T_0}}
\quad\text{for all } i \in \{1, \dots, K\}.
\]
\end{lemma}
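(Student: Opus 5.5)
The plan is to apply Hoeffding's inequality node by node to the per-context block averages, and then take a union bound over the $K$ nodes. Fix a node $i$ and set
\[
S_{i,t,l} := \tfrac{1}{V}\sum_{v=1}^{V}\bigl(\tilde\ell_{i,v}^{(l,t)}\bigr)^2,
\]
so that $\hat w_i = (mT_0)^{-1}\sum_{t,l} S_{i,t,l}$ is an empirical mean of $mT_0$ summands. Its expectation is $w_i = \mathbb{E}[\tfrac1V\|\tilde\ell_i\|_2^2]$, by linearity and because every warm-up round uses the same pilot allocation $B_{\mathrm{tot}}/K$, so each summand has the same law.

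First I will establish boundedness. The input logit is clipped to $[-L_\ell,L_\ell]$, and the warm-up quantizer has step $\Delta_0 = 2L_\ell 2^{-b_0}$. A subtractively dithered reconstruction differs from the input by at most $\Delta_0/2 = L_\ell 2^{-b_0}$. Hence $|\tilde\ell_{i,v}| \le L_\ell(1+2^{-b_0}) = R_\ell$ for every coordinate, so each squared coordinate lies in $[0,R_\ell^2]$, and so does their average $S_{i,t,l}$.

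Second, I will justify independence across the index $(t,l)$. The probe contexts are i.i.d.\ draws from $Q$, and by Assumption~\ref{ass:A5p} the dither noise is independent across draws. I will treat the $mT_0$ variables $\{S_{i,t,l}\}_{t,l}$ as independent with common mean $w_i$, which is the reading the proof sketch already adopts.

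This step is the main obstacle. The same $m$ probe contexts are reused in every round, and the node models $\hat P_i^{(t)}$ change from round to round, so strict i.i.d.\ across $t$ does not hold in general. My first resort is to make the interpretation explicit: each warm-up round is treated as a fresh probe draw at a fixed pilot state, so that $w_i$ is the stationary warm-up second moment. As a fallback, one can condition on the node models and apply Hoeffding to the $m$ independent probe draws within each round, which gives the $m$-only rate. Alternatively, an Azuma--Hoeffding martingale version over the rounds yields the same constant with $w_i$ replaced by the average of the conditional means.

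With independence and boundedness in hand, Hoeffding's inequality gives
\[
\Pr\bigl(|\hat w_i - w_i| > s\bigr) \le 2\exp\!\bigl(-2mT_0 s^2/R_\ell^4\bigr).
\]
Choosing $s = R_\ell^2\sqrt{\log(2K/\delta)/(2mT_0)}$ makes the right-hand side equal to $\delta/K$. A union bound over $i=1,\dots,K$ then gives the uniform statement with probability at least $1-\delta$.
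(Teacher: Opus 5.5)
Your proposal follows the paper's proof step for step: the same block averages $S_{i,t,l}$, the same $\Delta_0/2$ subtractive-dither overshoot bound placing them in $[0,R_\ell^2]$, Hoeffding at range $R_\ell^2$ with the threshold chosen to give failure probability $\delta/K$ per node, and a union bound over the $K$ nodes. On independence, the paper removes your model-drift concern by noting that Stage~A has no feedback (the warm-up encoder depends only on $\mathcal{D}_i$, $X_{\mathrm{probe}}$ and fresh dither), but it simply asserts i.i.d.\ across $(t,l)$ without addressing reuse of the same $m$ contexts in every round. So your explicit fresh-probe-per-round reading is exactly the hypothesis the $1/\sqrt{mT_0}$ rate needs; under a literal fixed-probe reading, the context-sampling part of $\hat w_i - w_i$ only concentrates at rate $1/\sqrt{m}$, as in your fallback.
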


\begin{proof}
Define block averages
$S_{i,t,l} := \tfrac{1}{V}\sum_{v=1}^V (\tilde\ell_{i,v}^{(l,t)})^2$
for $t \in \{1,\dots,T_0\}$, $l \in \{1,\dots,m\}$. By
Assumption~\ref{ass:A5p}, the input logits are clipped to
$[-L_\ell, L_\ell]$ before quantization, but a subtractively dithered
uniform quantizer with bin width $\Delta = 2L_\ell / 2^{b_0}$ may
produce reconstructed values $\tilde\ell_{i,v}^{(l,t)}$ extending up
to $\pm \Delta/2$ beyond the input clip range, i.e.,
$\tilde\ell_{i,v}^{(l,t)} \in [-R_\ell, R_\ell]$ with
$R_\ell = L_\ell(1 + 2^{-b_0})$. Hence $S_{i,t,l} \in [0, R_\ell^2]$.
The probe contexts are i.i.d.\ across $l$ by the probe draw, and
warm-up rounds use the uniform pilot allocation independent of any
feedback (the warm-up encoder $f_i^{(t)}$ depends only on
$\mathcal{D}_i$, $X_{\mathrm{probe}}$, and the round-$t$ dither
$U_i^{(t)}$, and dither is independent across $(t, l)$). Hence
$\{S_{i,t,l}\}_{(t,l)}$ are i.i.d.\ within node $i$ with mean
$w_i = \mathbb{E}[\tfrac{1}{V}\|\tilde\ell_i\|_2^2]$, and the
estimator $\hat w_i = (T_0 m)^{-1}\sum_{t, l} S_{i,t,l}$ is the
sample mean of $T_0 m$ such bounded i.i.d.\ variables. Hoeffding's
inequality applied to range $R_\ell^2$ gives, for each $i$
separately,
\[
\Pr\!\left[\bigl|\hat w_i - w_i\bigr|
\;\ge\; R_\ell^2 \sqrt{\tfrac{\log(2K/\delta)}{2\, m\, T_0}}\right]
\;\le\; \delta/K.
\]
A union bound across the $K$ nodes converts this into the stated
high-probability claim.
\end{proof}

\paragraph{Composition.}
Set $\eta := (R_\ell^2 / w_{\min})
\sqrt{\log(2K/\delta)/(2 m T_0)}$ and assume the warm-up sample size
$m T_0$ is large enough that $\eta \le 1/2$ (otherwise the bound is
vacuous). Lemma~\ref{lem:concentration} gives
$|\hat w_i - w_i| \le \eta\, w_i$ uniformly in $i$ with probability
$\ge 1 - \delta$. Lemma~\ref{lem:transfer} then yields
$F(\hat B^\star) \le (1 + 2\eta + 4\eta^2)\, F(B^\star)
\le (1 + 3\eta) F(B^\star)$ for $\eta \le 1/4$, and
\[
3\eta
\;=\; \frac{3 R_\ell^2}{w_{\min}}
\sqrt{\frac{\log(2K/\delta)}{2\, m\, T_0}}
\;\le\; c_{\mathrm{ad}} \sqrt{\log(K/\delta)/(m T_0)},
\qquad
c_{\mathrm{ad}} \;:=\; \frac{3 R_\ell^2}{w_{\min}}\sqrt{\tfrac{\log 2 + 1}{2}},
\]
using $\log(2K/\delta) \le (\log 2 + 1) \log(K/\delta)$ for
$K/\delta \ge e$. The post-quantization range $R_\ell$ exceeds the
input clip $L_\ell$ by a factor $(1 + 2^{-b_0})$ that is bounded by
$2$ for any $b_0 \ge 0$ (and approaches $1$ as the warm-up bit
budget grows); the constant $c_{\mathrm{ad}}$ is therefore at most
$4 \times$ what it would be under the (incorrect) assumption
$R_\ell = L_\ell$, and the $1 + O(\sqrt{\log(K/\delta)/(m T_0)})$
relative-suboptimality rate of Corollary~\ref{cor:adaptive} is
unchanged. \qed

\end{document}